%% file: arxiv.tex
\documentclass{article}

\makeatletter
\AddToHook{begindocument/before}{%
  \iclrfinalcopy
  \input{arxiv-authors.tex}  \hypersetup{pdftitle={\@title},pdfauthor={\arxivauthornames}}%
  \lhead{Preprint}%
  \renewcommand{\@maketitle}{%
    \vbox{\hsize\textwidth\centering
      {\LARGE\bfseries\@title\par}%
      \vskip 10pt
      \@author\par
      \vskip 10pt
    }%
  }%
}
\makeatother

\usepackage{iclr2027_conference,times}

\input{math_commands.tex}

\usepackage{hyperref}
\usepackage{url}
\usepackage[utf8]{inputenc} %
\usepackage[T1]{fontenc}    %
\usepackage{hyperref}       %
\usepackage{url}            %
\usepackage{booktabs}       %
\usepackage{longtable}
\usepackage{amsfonts}       %
\usepackage{nicefrac}       %
\usepackage{microtype}      %
\usepackage{xcolor}         %
\usepackage{colortbl}

\usepackage{amsmath}
\usepackage{amssymb}
\usepackage{mathtools}
\usepackage{amsthm}
\theoremstyle{plain}
\newtheorem{theorem}{Theorem}[section]
\newtheorem{proposition}[theorem]{Proposition}
\newtheorem{lemma}[theorem]{Lemma}

\theoremstyle{definition}
\newtheorem{definition}[theorem]{Definition}

\theoremstyle{remark}
\newtheorem{remark}[theorem]{Remark}

\usepackage{enumitem}
\usepackage{comment}
\usepackage{amssymb}

\usepackage{graphicx}
\usepackage{amsmath} 
\usepackage{float}
\usepackage{wrapfig}

\usepackage{multirow}
\usepackage{multicol}
\usepackage{makecell}

\usepackage{pifont}
\usepackage{microtype}
\usepackage{graphicx}
\usepackage{booktabs} %

\usepackage[ruled,vlined]{algorithm2e}
\usepackage{algorithmic}

\usepackage{caption}

\usepackage{amsthm}
\theoremstyle{plain}

\newcommand{\0}{{\bf 0}}  %

\title{DraftAttention2: Fast Video Diffusion with Low-Resolution-Guided Mixed-Precision Attention}

\author{
}

\begin{document}
\raggedbottom

\maketitle

\input{sections/0-abstract}

\input{sections/1-introduction}
\input{sections/2-related_work}
\input{sections/3-methodology}
\input{sections/4-experiments}

\input{sections/5-conclusion}

\input{arxiv.bbl}
\input{sections/6-appendix}

\end{document}

%% file: arxiv-authors.tex
\newcommand{\arxivauthornames}{Rui Ding, Haopeng Li, Weize Ma, Yufa Zhou,
  Yitong Li, Xiaoling Zhou, Jiashuo Cao, Liyang Li, Hua Geng, Jiuxiang Gu,
  Jun Lin, Enze Xie, Xuan Shen}

\author{%
  \begin{minipage}[t]{\textwidth}
  \centering\normalfont
  Rui Ding\textsuperscript{1,*,§}\hspace{0.8em}
  Haopeng Li\textsuperscript{2,*,§}\hspace{0.8em}
  Weize Ma\textsuperscript{1,§}\hspace{0.8em}
  Yufa Zhou\textsuperscript{3,§}\hspace{0.8em}
  Yitong Li\textsuperscript{4}\hspace{0.8em}
  Xiaoling Zhou\textsuperscript{5}\\[2pt]
  Jiashuo Cao\textsuperscript{1}\hspace{0.8em}
  Liyang Li\textsuperscript{5}\hspace{0.8em}
  Hua Geng\textsuperscript{5}\hspace{0.8em}
  Jiuxiang Gu\textsuperscript{6,§}\hspace{0.8em}
  Jun Lin\textsuperscript{1}\hspace{0.8em}
  Enze Xie\textsuperscript{4}\hspace{0.8em}
  Xuan Shen\textsuperscript{5,†,§}\\[6pt]
  {\small
  \textsuperscript{1}Nanjing University\quad
  \textsuperscript{2}HKUST(GZ)\quad
  \textsuperscript{3}Duke University\quad
  \textsuperscript{4}NVIDIA\quad
  \textsuperscript{5}Zhejiang University\quad
  \textsuperscript{6}Adobe Research\\[3pt]
  \textsuperscript{*}Equal Contribution,
  \textsuperscript{†}Corresponding Author,
  \textsuperscript{§}Core Contribution
  }
  \end{minipage}%
}

%% file: math_commands.tex
\usepackage{amsmath,amsfonts,bm}

\def\eqref#1{\textup{(\ref{#1})}}

\def\1{\bm{1}}

\DeclareMathAlphabet{\mathsfit}{\encodingdefault}{\sfdefault}{m}{sl}
\SetMathAlphabet{\mathsfit}{bold}{\encodingdefault}{\sfdefault}{bx}{n}

%% file: sections/0-abstract.tex
\begin{abstract}
\vspace{-3mm}

Video generation has broad applications in content creation and entertainment. Diffusion transformers have advanced the quality of generated videos, but attention over spatiotemporal tokens becomes increasingly expensive as video resolution and duration increase. 
We present DraftAttention2, a training-free framework that uses the low-resolution draft attention map to jointly select attention blocks and assign their numerical precision.
Specifically, spatial 2D average- and max-pooled queries and keys capture complementary regional statistics to estimate block importance, and a shared ranking assigns higher precision to important blocks, lower precision to less important retained blocks, and skips the rest under configurable budgets.
Our analysis separates sparsification error from attention-weighted quantization error, establishing when recovering skipped interactions with low-bit computation tightens the output-error bound. This analysis motivates retaining more interactions at low precision while reserving higher precision for blocks with larger attention mass.
To translate these fine-grained assignments into practical speedups, we further develop fused operand preparation and a single attention kernel with precision-specific phases, sharing data movement, softmax statistics, and output accumulation across precisions.
Experiments demonstrate that our method achieves a superior quality–efficiency trade-off over existing efficient video generation methods. 
Notably, its advantage is particularly pronounced for few-step video diffusion, where jointly combining sparsity with 4- and 8-bit mixed-precision computation substantially improves generation quality while retaining significant acceleration.
Code is available at \url{https://github.com/anemoi-project/anemoi}
\vspace{-3mm}

\end{abstract}

%% file: sections/1-introduction.tex
\section{Introduction}
\label{sec:introduction}

Recent advances in video generation have enabled the synthesis of videos with increasingly realistic appearance and coherent motion~\citep{kong2024hunyuanvideo,wan2025}.
These models build on Diffusion Transformers (DiTs)~\citep{dit}, using attention to model spatial and temporal dependencies.
However, extending these capabilities to longer videos and higher resolutions remains computationally expensive.
A major bottleneck is attention over spatiotemporal tokens: its computational cost scales quadratically with sequence length, and the computation is repeated at every denoising step.
Figure~\ref{fig:motivation}(a) illustrates this bottleneck, with attention dominating the DiT runtime.

\begin{wrapfigure}{r}{0.50\textwidth}
\vspace{-5mm}
    \centering
    \includegraphics[width=\linewidth]{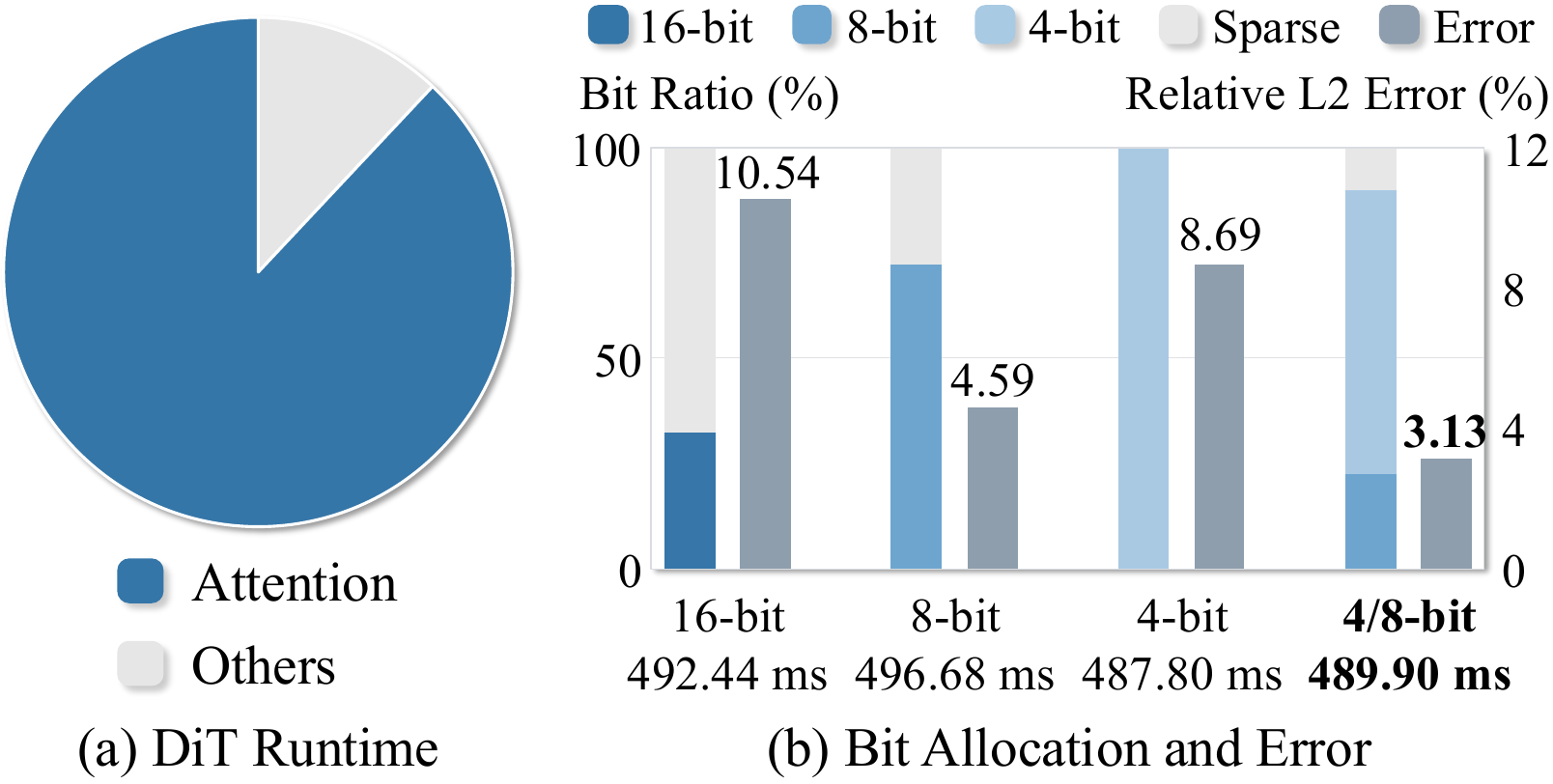}
    \captionsetup{font=small,skip=3pt}
    \caption{Attention cost and precision allocation.
    (a) DiT runtime breakdown.
    (b) Block allocation and relative L2 output error at similar latency.}
    \label{fig:motivation}
    \vspace{-5mm}
\end{wrapfigure}

The sparsity offers a direct way to reduce this cost by retaining a subset of attention interactions as discussed in the previous works~\citep{xi2025sparse_efficient_dit_video_sparseattn,xia2025training,shen2025draftattention}.
Meanwhile, the low-precision computation provides a complementary opportunity: higher matrix-multiplication throughput at lower precisions translates into faster attention.
Low-bit attention exploits this hardware capability through quantization~\citep{zhang2024sageattention2,zhang2025sageattention3,zhang2025sageattention}.
Combining sparsity with quantization enables sparse low-bit attention~\citep{zhang2025spargeattn}, while selective mixed-precision methods further improve accuracy by assigning higher precision to important interaction blocks~\citep{sharratt2026thriftattention}.

Building on block-sparse attention, we generalize the binary retain-or-skip decision into a joint sparsity–precision allocation, where each block is skipped or computed at a selected numerical precision.
Low-precision computation replaces part of the sparsification, while reserving a small high-precision budget for the most important blocks.
Figure~\ref{fig:motivation}(b) illustrates this trade-off: at similar measured latency, mixed 4/8-bit attention achieves lower output error than the single-precision sparse and dense allocations.
This design introduces two key challenges: efficiently estimating block importance to guide both block selection and precision assignment, and executing heterogeneous precisions without incurring substantial switching or output-merging overhead.

\begin{figure}[t]
 \centering
 \includegraphics[width=\linewidth]{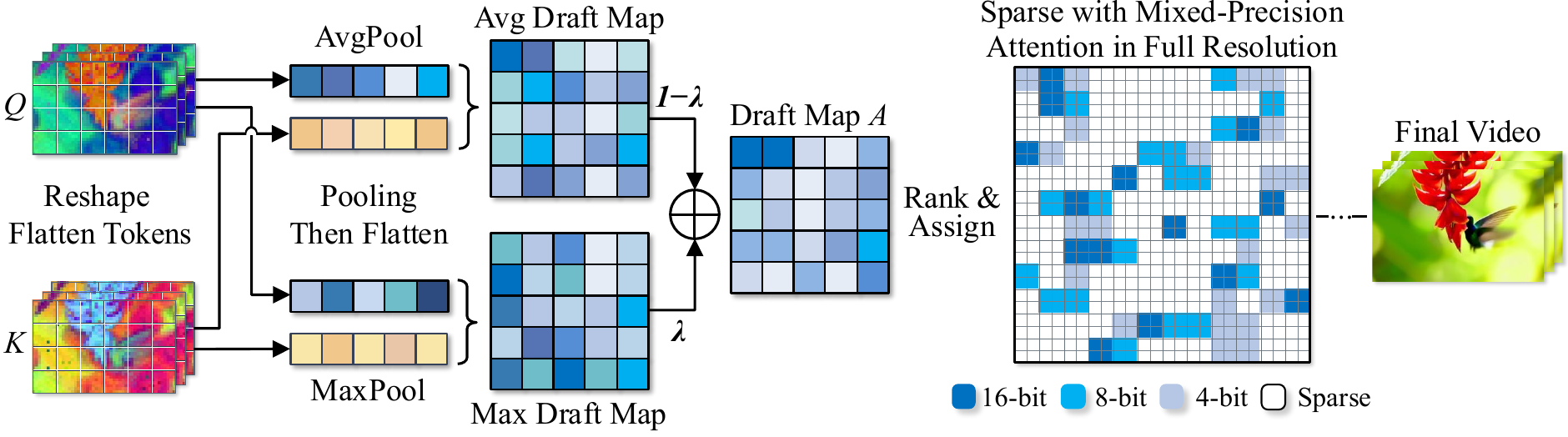}
 \vspace{-6mm}
 \caption{
 Overview of DraftAttention2. Average- and max-pooled queries and keys produce draft attention maps, whose weighted mixture guides block selection and precision allocation. 
 }
 \label{fig:pipeline-overview}
 \vspace{-4mm}
\end{figure}

\textbf{Framework overview.}
We present \textbf{DraftAttention2}, a training-free framework for sparse and mixed-precision attention in video diffusion (Figure~\ref{fig:pipeline-overview}).
The key idea is to construct a low-resolution draft attention map from complementary average- and max-pooled queries and keys, and use it to jointly guide block selection and precision allocation. 
The resulting regional importance scores are ranked across all region pairs within each attention head: higher-scoring blocks are assigned higher precision, lower-ranked retained blocks use lower precision, and the rest are skipped under configurable budgets.
This shared ranking allows both the number of retained blocks and their precision assignments to adapt across query regions under a common head-level budget. 
Finally, spatial token reordering maps these regional routing decisions to contiguous attention blocks for efficient full-resolution computation.

\textbf{Theoretical analysis.}
Our analysis connects joint sparsity--precision allocation to attention-output error.
Average pooling preserves mean regional logits, while the max pooling supplies variation information beyond the mean, motivating the combined draft.
We then decompose the output error into sparsification and attention-weighted quantization terms. 
This decomposition shows that computing moderately important interactions at low precision can be a particularly favorable replacement for sparsification: recovering these interactions with low-bit computation can substantially tighten the output-error bound relative to the original DraftAttention formulation. 
Finally, under fixed precision budgets, we show that assigning higher precision to interactions with larger attention mass minimizes the resulting error bound, providing theoretical justification for the draft-guided precision allocation.

\textbf{Low-overhead mixed-precision execution.}
To efficiently execute the resulting fine-grained precision assignments, we fuse spatial reordering, pooling, and quantization into the preparation of draft descriptors and full-resolution attention operands. 
Retained blocks are grouped by precision and processed in consecutive phases within a single attention kernel, so the number of precision transitions depends only on the number of active precision formats rather than the number of blocks. 
All phases share online softmax statistics and a common output accumulator, with lightweight rescaling at phase boundaries. 
This design avoids separate per-precision outputs and an additional merging kernel, substantially reducing the overhead of mixed-precision execution.

\textbf{Experimental results.}
We evaluate DraftAttention2 on HunyuanVideo and MiniMax-H3, measuring video generation quality, similarity to dense-attention outputs, and DiT inference speed.
The results show that draft-guided sparse and mixed-precision attention accelerates video generation with competitive quality.
Allocating a small fraction of blocks to higher precision improves output similarity while preserving most of the speed benefit of low-precision computation, making precision allocation an effective way to control the quality--speed trade-off.
Notably, on few-step video diffusion, where approximation errors are less likely to be corrected over repeated denoising steps, DraftAttention2 exhibits a substantially stronger advantage over existing efficient attention methods, demonstrating its effectiveness in this more practical generation setting.

Our contributions are summarized as follows:
\begin{itemize}[leftmargin=*]
\vspace{-2.5mm}
    \item We introduce DraftAttention2, which uses low-resolution guidance to jointly allocate sparsity and numerical precision across attention blocks under configurable budgets.
    Our analysis shows that computing moderately important interactions at low precision offers a particularly favorable error--efficiency trade-off: replacing their sparsification with low-bit computation can substantially tighten the output-error bound, leading to an overall tighter error bound than the previous DraftAttention.
    
    \item We develop an execution scheme that supports multiple precisions within the same attention computation with low overhead.
    Fused preparation shares data movement between draft construction and operand quantization, while precision-specific phases share normalization and output accumulation.
    
    \item Experiments on HunyuanVideo and MiniMax-H3 demonstrate faster DiT inference with competitive generation quality and show that a small high-precision budget improves fidelity to dense-attention outputs.
    DraftAttention2 further delivers particularly strong gains on few-step video diffusion, substantially improving generation quality over existing efficient attention methods while maintaining significant speedup.
    
\end{itemize}

%% file: sections/2-related_work.tex
\section{Related Work}
\label{sec:related_work}

\noindent\textbf{Sparse attention for video generation.}
Video diffusion models exhibit local spatiotemporal structure and data-dependent attention sparsity.
Sparse VideoGen~\citep{xi2025sparse_efficient_dit_video_sparseattn} uses online profiling to select spatial or temporal attention patterns for each head.
DraftAttention~\citep{shen2025draftattention} uses spatially average-pooled queries and keys to select blocks for full-resolution sparse attention.
AdaSpa~\citep{xia2025training} searches for block-sparse masks online and reuses them between designated search steps.
Sparse VideoGen2~\citep{svg2} clusters and reorders query and key tokens, using cluster-centroid attention estimates for top-$p$ selection.

\noindent\textbf{Low-bit and mixed-precision attention.}
SageAttention2~\citep{zhang2024sageattention2} combines per-thread INT4 quantization of queries and keys with FP8 quantization of attention weights and values.
SageAttention3~\citep{zhang2025sageattention3} applies NVFP4 quantization to both attention matrix multiplications, with two-level scaling of the attention weights.
SpargeAttention~\citep{zhang2025spargeattn} combines quantization with block selection that skips both matrix multiplications and an online softmax-aware filter that further skips value accumulation.
Block-level mixed-precision methods allocate higher precision to selected interactions.
Diagonal-Tiled Mixed-Precision Attention (DMA)~\citep{ding2026dma} retains higher-precision query--key computation near the diagonal.
BMAttn~\mbox{\citep{anonymous2026bmattn}} and ThriftAttention~\citep{sharratt2026thriftattention} both adopt block-level mixed precision, with BMAttn partitioning attention into high-precision, low-precision, and skipped regions using offline-calibrated per-head boundaries, while ThriftAttention uses pooled queries and keys to assign FP16 to top-$k$ key blocks and FP4 to the remaining unmasked blocks.
Our method jointly selects attention blocks and assigns them multiple precisions using a shared low-resolution importance ranking over all blocks in each attention head.
This allows both the number of retained blocks and their precision allocation to vary across query regions under a common per-head budget.

%% file: sections/3-methodology.tex
\section{Methodology}
\label{sec:methodology}

As shown in Figure~\ref{fig:pipeline-overview}, DraftAttention2 extends block selection to a joint allocation of sparsity and precision.
A low-resolution draft attention map ranks regional interactions, assigning higher precision to important blocks, lower precision to other retained blocks, and skipping the remainder.
We first describe this allocation, then relate it to attention-weighted quantization error, and finally show how to execute the resulting mixed-precision attention with shared operand preparation and normalization.

\subsection{Draft-Guided Precision Allocation}
\label{sec:precision-allocation}

For one attention head, let $Q,K,V\in\mathbb R^{N\times d}$, with dense attention $P=\operatorname{softmax}_{\rm row}(QK^\top/\sqrt d)$ and output $O^{\rm dense}=PV$.
Following DraftAttention~\citep{shen2025draftattention}, we estimate regional importance before evaluating token-level attention.

\paragraph{Spatial grouping and draft attention.}
Spatially neighboring video tokens provide a natural unit for pooling and block-wise computation.
We partition each frame into connected regions $\{\mathcal R_a\}$, with balanced token counts bounded by the attention block capacity.
Section~\ref{sec:hardware} describes the spatial partition and its alignment with attention blocks.

For each region, we form an average descriptor and a channel-wise maximum descriptor:
\begin{equation}
 \bar Q_a=\frac{1}{|\mathcal R_a|}\sum_{u\in\mathcal R_a}Q_u,
 \qquad Q^{\max}_{a,c}=\max_{u\in\mathcal R_a}Q_{u,c},
 \label{eq:v2-descriptors}
\end{equation}
with $\bar K$ and $K^{\max}$ defined analogously.
We compute a draft map from each pair of pooled query and key tensors, normalize each map over key regions, and combine them to obtain the block importance scores.

\begin{definition}[Draft-Guided Precision Allocation]
\label{def:v2-allocation}
Let $\bar Q,\bar K$ and $Q^{\max},K^{\max}$ be the regional average and channel-wise maximum descriptors.
For a pooling weight $\lambda\in[0,1]$, define
\begin{equation}
 A=(1-\lambda)\operatorname{softmax}_{\rm row}
 \!\left(\frac{\bar Q\bar K^\top}{\sqrt d}\right)
 +\lambda\operatorname{softmax}_{\rm row}
 \!\left(\frac{Q^{\max}(K^{\max})^\top}{\sqrt d}\right).
 \label{eq:v2-draft}
\end{equation}
Given head-level precision quotas $T_{16},T_8,T_4$, rank video-region pairs by $A_{ab}$.
Assign the first $T_{16}$ pairs to the 16-bit path, the next $T_8$ to INT8, and the next $T_4$ to NVFP4; skip the rest.
\end{definition}

\begin{remark}[Complementary Regional Statistics]
\label{rem:v2-pooling}
By bilinearity, $\bar Q_a^\top\bar K_b/\sqrt d$ equals the mean token-pair logit between regions $\mathcal R_a$ and $\mathcal R_b$.
Regions with the same average descriptor can nevertheless have different channel-wise maxima.
The Max branch exposes these variations to the draft, while $\lambda$ controls their contribution; $\lambda=0$ recovers average-only guidance.
\end{remark}

\begingroup
\IncMargin{1em}
\begin{algorithm}[H]
 \small
 \caption{Draft-guided sparsity and precision allocation}
 \label{alg:v2-forward}
 \DontPrintSemicolon
 \LinesNumbered
 \SetAlgoNlRelativeSize{-1}
 \KwIn{$Q,K$; regions $\{\mathcal R_a\}_{a=1}^{R}$; $\lambda$; quotas $T_{16},T_8,T_4$}
 \KwOut{Precision assignments $p\in\{0,4,8,16\}^{R\times R}$}
 $\bar Q,\bar K\leftarrow\operatorname{AvgPool}(Q,\{\mathcal R_a\}),\operatorname{AvgPool}(K,\{\mathcal R_a\})$\;
 $Q^{\max},K^{\max}\leftarrow\operatorname{MaxPool}(Q,\{\mathcal R_a\}),\operatorname{MaxPool}(K,\{\mathcal R_a\})$\;
 $A\leftarrow(1-\lambda)\operatorname{softmax}_{\rm row}(\bar Q\bar K^\top/\sqrt d)
 +\lambda\operatorname{softmax}_{\rm row}(Q^{\max}(K^{\max})^\top/\sqrt d)$\;
 $I\leftarrow\operatorname{argsort}(\operatorname{vec}(A),\ \mathrm{descending}=\mathrm{True},\ \mathrm{stable}=\mathrm{True})$\;
 $T\leftarrow T_{16}+T_8+T_4$; $p\leftarrow\mathbf{0}_{R^2}$\;
 $p[I[\,{:}\,T_{16}]]\leftarrow16$\;
 $p[I[T_{16}\,{:}\,T_{16}+T_8]]\leftarrow8$\;
 $p[I[T_{16}+T_8\,{:}\,T]]\leftarrow4$\;
 \Return{$\operatorname{reshape}(p,R,R)$}\;
\end{algorithm}
\endgroup

\paragraph{Joint allocation of sparsity and precision.}
For $R$ video regions, the draft contains $R^2$ scores, replacing token-level scoring for the allocation decision.
The total retained budget is $T=T_{16}+T_8+T_4$, giving a retained fraction of $T/R^2$.
The individual quotas specify how many blocks use each precision.
Since ranking is performed across the whole map within each head, query regions can receive different numbers of retained blocks and different precision allocations under the same head-level budget.

Each selected region pair is then evaluated at full resolution using its assigned precision.
All retained blocks for a query contribute to a common softmax normalization and output, including blocks assigned to different precisions.
The pooled descriptors guide this computation; the output is formed from the full-resolution values.
Algorithm~\ref{alg:v2-forward} gives the draft-guided allocation, where $p_{ab}\in\{0,4,8,16\}$ specifies the precision of region pair $(a,b)$ and $0$ denotes a skipped pair.
Section~\ref{sec:hardware} describes attention computation with these assignments.
Routing details and numerical formats are given in Appendix~\ref{app:v2-implementation}.

\subsection{Quantization and Attention Importance}
\label{sec:quantization-analysis}

DraftAttention selects which interactions to compute at the reference precision.
DraftAttention2 also chooses how accurately to compute them: a skipped block can instead contribute through a low-bit path.
We analyze when the recovered attention mass outweighs the additional quantization error, and how precision should be distributed among retained blocks.

\paragraph{Illustrative example.}
Figure~\ref{fig:motivation}(b) compares four allocations on the same captured video-attention input, using average-only draft guidance ($\lambda=0$).
At similar measured latency, sparse 16-bit attention loses information through omitted interactions, whereas dense 4-bit attention retains every interaction but incurs larger quantization error.
The mixed 4/8-bit allocation balances these effects, reducing relative output error to $3.132\%$.
The 16-bit label denotes the reference-precision path; the analysis isolates the additional error from low-bit quantization.

\paragraph{From sparsification to quantization error.}
Quantization perturbs both the attention logits and the weighted value aggregation.
We take the original BF16 inputs as the reference and leave the 16-bit path unquantized in the reconstruction model; ordinary floating-point roundoff is treated separately in Appendix~\ref{app:v2-error-proof}.
For a query $u$ with nonempty retained support $\Omega_u$, let $P^\Omega_{uv}$ be the exact attention probability renormalized over $\Omega_u$.
Let $\Omega_u^{(p)}$ contain the retained keys assigned precision $p$, including any retained prefix interactions.
Define
\begin{equation}
 m_u^{(p)}=\sum_{v\in\Omega_u^{(p)}}P^\Omega_{uv},
 \qquad
 \delta_u=\sum_{v\notin\Omega_u}P_{uv},
 \qquad
 \sum_{p\in\{4,8,16\}}m_u^{(p)}=1.
 \label{eq:v2-block-mass}
\end{equation}
Here $\delta_u$ measures omitted dense attention mass, while $m_u^{(p)}$ measures the share of retained attention assigned to each precision.

\begin{lemma}[Attention-Weighted Quantization Error]
\label{thm:v2-quantization}
Suppose $\|V_v\|_2\le V_{\max}$ and the reconstructed operands satisfy the error bounds in Appendix~\ref{app:v2-error-model}.
Let $\varepsilon_p$ be the uniform error envelope for low-bit precision $p\in\{4,8\}$ in Equation~\eqref{eq:v2-format-envelope}.
The reconstructed mixed-precision output satisfies
\begin{equation}
 \|O_u^{\rm dense}-\widehat O_u\|_2
 \le
 \underbrace{2V_{\max}\delta_u}_{\text{sparsification}}
 +
 \underbrace{m_u^{(8)}\varepsilon_8
 +m_u^{(4)}\varepsilon_4}_{\text{quantization}}.
 \label{eq:v2-total-error}
\end{equation}
\end{lemma}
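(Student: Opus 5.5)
We split the error by the triangle inequality through the exact sparse output $O_u^\Omega=\sum_{v\in\Omega_u}P^\Omega_{uv}V_v$:
\[
\|O_u^{\rm dense}-\widehat O_u\|_2\le \|O_u^{\rm dense}-O_u^\Omega\|_2+\|O_u^\Omega-\widehat O_u\|_2 ,
\]
and bound the two terms separately as the sparsification and quantization contributions.

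\textbf{Sparsification term.} Write $Z=\sum_{v}e^{s_{uv}}$ and $Z_\Omega=\sum_{v\in\Omega_u}e^{s_{uv}}$, where $s_{uv}$ are the exact logits. Then $P^\Omega_{uv}=P_{uv}/(1-\delta_u)$ for $v\in\Omega_u$, and
\[
O_u^{\rm dense}-O_u^\Omega=\sum_{v\notin\Omega_u}P_{uv}V_v-\Bigl(\tfrac{1}{1-\delta_u}-1\Bigr)\sum_{v\in\Omega_u}P_{uv}V_v .
\]
The first sum has norm at most $\delta_uV_{\max}$. The second coefficient times the retained sum equals $\delta_u\sum_{v\in\Omega_u}P^\Omega_{uv}V_v$, whose norm is also at most $\delta_uV_{\max}$. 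This gives $2V_{\max}\delta_u$. The bound is the standard DraftAttention-style argument and needs no quantization assumption.

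\textbf{Quantization term.} This is the main step. It depends on the form of the envelope in Equation~\eqref{eq:v2-format-envelope}, which I take to state the following: for a retained key $v$ in precision class $p\in\{4,8\}$, the reconstructed logit and value satisfy bounds such that the contribution of each low-bit key to the output deviation, measured relative to its exact renormalized weight, is at most $\varepsilon_p$. The 16-bit path is treated as exact.

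Concretely, I would write $\widehat O_u=\sum_{v\in\Omega_u}\widehat P^\Omega_{uv}\widehat V_v$ and decompose
\[
O_u^\Omega-\widehat O_u=\sum_{v}P^\Omega_{uv}(V_v-\widehat V_v)+\sum_v(P^\Omega_{uv}-\widehat P^\Omega_{uv})\widehat V_v .
\]
The value term is bounded classwise by $\sum_p m_u^{(p)}\eta_p$, where $\eta_p$ is the value error bound for class $p$. For the weight term, a logit perturbation $|\hat s_{uv}-s_{uv}|\le\gamma_p$ on class $p$ changes the softmax by a multiplicative factor. Using $\sum_v(P^\Omega_{uv}-\widehat P^\Omega_{uv})=0$, I subtract a centering vector (for example $O_u^\Omega$) so that each value has norm at most $2V_{\max}$. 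The weight deviation's $\ell_1$ norm is then controlled by $\sum_p m_u^{(p)}(e^{\gamma_p}-1)$ or a similar expression, so it too is attention-weighted by class.

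Collecting terms gives $\varepsilon_p=\eta_p+c\,V_{\max}(e^{2\gamma_p}-1)$ or whatever Equation~\eqref{eq:v2-format-envelope} defines, with $\varepsilon_{16}=0$. This yields $m_u^{(8)}\varepsilon_8+m_u^{(4)}\varepsilon_4$.

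\textbf{Main obstacle.} The weight perturbation is global, since the normalizer mixes all classes, and the bound must still come out linear in the class masses. The key is that the relative change of the normalizer is itself a mass-weighted average of per-class factors. I would first try bounding $|\widehat P^\Omega_{uv}/P^\Omega_{uv}-1|$ by combining the key's own factor with this normalizer average, and absorb the cross term into the envelope $\varepsilon_p$ as defined in the appendix.
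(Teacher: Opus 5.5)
Your sparsification step is correct and is the paper's argument: $O_u^{\rm dense}-O_u^\Omega=D_u-\delta_uO_u^\Omega$ with $D_u=\sum_{v\notin\Omega_u}P_{uv}V_v$.

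The gap is in the quantization step, which uses a reconstruction model different from the one the lemma refers to. In Appendix~\ref{app:v2-error-model} the output is $\widehat O_u=\sum_{v\in\Omega_u}\widehat w_{uv}\widehat V_v/Z_u^{\rm q}$. The denominator $Z_u^{\rm q}=\sum_{v\in\Omega_u}w_{uv}$ sums the exponential weights $w_{uv}=e^{\widehat s_{uv}-h_u}$ \emph{before} they are converted to low-bit probability operands. The numerator instead uses the converted weights $\widehat w_{uv}$, under a separate hypothesis $\sum_{v\in\mathcal B_b}|\widehat w_{uv}-w_{uv}|\le\omega_{ub}\sum_{v\in\mathcal B_b}w_{uv}$.

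This has three consequences for your argument:
\begin{itemize}
\item The effective weights $\widehat w_{uv}/Z_u^{\rm q}$ do not sum to one, so the identity $\sum_v(P^\Omega_{uv}-\widehat P^\Omega_{uv})=0$ that your centering relies on is false.
\item Your two-term decomposition has no place for the $(V_{\max}+\nu_p)\omega_p$ part of $\varepsilon_p$. That error cannot be centered; it must be paid against $\|\widehat V_v\|_2\le V_{\max}+\nu_{ub}$.
\item Even within your own decomposition, centering $\widehat V_v$ at $O_u^\Omega$ gives a bound of $2V_{\max}+\nu$, not $2V_{\max}$.
\end{itemize}
In addition, the step you call the main obstacle is left as a plan, and the closing envelope is a guess rather than a derivation. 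In the paper, the shared normalizer is exactly what produces the common prefactor $e^{\bar\alpha}$, with $\bar\alpha=\max(\alpha_4,\alpha_8)$, in every $\varepsilon_p$. The reason is that the normalizer seen by INT8 keys also moves with the NVFP4 logit errors.

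The missing idea is to avoid normalized probabilities and work with unnormalized weights at a common row reference. Set $z_{uv}=e^{s_{uv}-h_u}$ and $Z_u=\sum_{v\in\Omega_u}z_{uv}$. Using only $\sum_{v\in\Omega_u}z_{uv}(V_v-O_u^\Omega)=0$, one has the exact identity
\begin{align*}
Z_u^{\rm q}(\widehat O_u-O_u^\Omega)
&=\sum_{v\in\Omega_u}(w_{uv}-z_{uv})(V_v-O_u^\Omega)
+\sum_{v\in\Omega_u}w_{uv}(\widehat V_v-V_v)\\
&\quad+\sum_{v\in\Omega_u}(\widehat w_{uv}-w_{uv})\widehat V_v .
\end{align*}
Every term is local to its key block and vanishes identically on 16-bit blocks. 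So no deviation of a 16-bit key ever has to be re-attributed to low-bit mass, which is the cross term you were unsure how to absorb.

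Bounding blockwise with $|w_{uv}-z_{uv}|\le z_{uv}(e^{\alpha_{ub}}-1)$, $w_{uv}\le e^{\alpha_{ub}}z_{uv}$, and $\|V_v-O_u^\Omega\|_2\le2V_{\max}$ gives at most $c_{ub}Z_um_{ub}$ per block. The global normalizer then enters only once, through $Z_u^{\rm q}\ge e^{-\alpha_u}Z_u$. This yields $\|\widehat O_u-O_u^\Omega\|_2\le e^{\alpha_u}\sum_bm_{ub}c_{ub}\le m_u^{(8)}\varepsilon_8+m_u^{(4)}\varepsilon_4$.
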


The quantization term weights each precision's error by the attention mass computed at that precision.
The envelopes account jointly for logit, value, and probability-weight reconstruction, consistent with the shared normalization used by the kernel.
The two terms expose the choice between omitting an interaction and retaining a quantized approximation to it.

\begin{proposition}[From Sparse to Mixed-Precision Attention]
\label{prop:v2-recovery}
Compare a DraftAttention (v1) sparse allocation with a DraftAttention2 allocation that preserves the same 16-bit blocks and computes additional blocks at 8 or 4 bits.
For one query, let $\delta_{\rm v2}$ be the dense attention mass skipped by v2 and $\tau_p$ the dense mass assigned to its $p$-bit path, $p\in\{4,8\}$.
Assume $0<\tau=\tau_8+\tau_4<1-\delta_{\rm v2}$, so the shared 16-bit support is nonempty.
Lemma~\ref{thm:v2-quantization} gives the respective error bounds
\begin{equation}
 B_{\rm v1}=2V_{\max}(\delta_{\rm v2}+\tau),\qquad
 B_{\rm v2}=2V_{\max}\delta_{\rm v2}+
 \frac{\tau_8\varepsilon_8+\tau_4\varepsilon_4}{1-\delta_{\rm v2}}.
 \label{eq:v2-recovery-bounds}
\end{equation}
In particular, $B_{\rm v2}<B_{\rm v1}$ whenever
\begin{equation}
 \tau_8\varepsilon_8+\tau_4\varepsilon_4
 <2V_{\max}\tau(1-\delta_{\rm v2}).
 \label{eq:v2-recovery-condition}
\end{equation}
\end{proposition}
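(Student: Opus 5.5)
The plan is to apply Lemma~\ref{thm:v2-quantization} to each of the two allocations for the fixed query $u$. The work is mainly in translating the lemma's renormalized masses $m_u^{(p)}$ into the dense masses $\tau_p$ and $\delta_{\rm v2}$. After that, the comparison of $B_{\rm v1}$ and $B_{\rm v2}$ is elementary algebra.

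\textbf{Step 1: the v1 bound.} The v1 allocation keeps only the shared 16-bit blocks. Its retained support is therefore the 16-bit support of v2, which carries dense mass $1-\delta_{\rm v2}-\tau>0$ by assumption, so the support is nonempty and the lemma applies. The skipped dense mass is $\delta_{\rm v1}=\delta_{\rm v2}+\tau$, because v1 omits everything v2 omits plus the blocks that v2 routes to the 8- and 4-bit paths. All retained mass sits on the 16-bit path, so $m^{(8)}=m^{(4)}=0$. The lemma then yields $B_{\rm v1}=2V_{\max}(\delta_{\rm v2}+\tau)$.

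\textbf{Step 2: the v2 bound.} The key observation concerns renormalization. The probability $P^\Omega_{uv}$ is the dense probability renormalized over $\Omega_u$, namely $P^\Omega_{uv}=P_{uv}/\sum_{w\in\Omega_u}P_{uw}=P_{uv}/(1-\delta_{\rm v2})$. Hence
\[
m_u^{(p)}=\frac{\tau_p}{1-\delta_{\rm v2}},\qquad p\in\{4,8\}.
\]
Substituting this into \eqref{eq:v2-total-error} with sparsification mass $\delta_{\rm v2}$ gives exactly $B_{\rm v2}$ as stated in \eqref{eq:v2-recovery-bounds}. Positivity of $1-\delta_{\rm v2}$ follows from the hypothesis $\tau<1-\delta_{\rm v2}$ together with $\tau>0$.

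\textbf{Step 3: the comparison.} Subtracting the two bounds,
\[
B_{\rm v1}-B_{\rm v2}=2V_{\max}\tau-\frac{\tau_8\varepsilon_8+\tau_4\varepsilon_4}{1-\delta_{\rm v2}}.
\]
Multiplying through by the positive quantity $1-\delta_{\rm v2}$ shows that this difference is positive exactly when \eqref{eq:v2-recovery-condition} holds. This establishes the "in particular" claim, and in fact gives an equivalence.

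\textbf{Main obstacle.} The delicate step is Step 2: the lemma is phrased in renormalized retained masses, while the proposition uses dense masses. I would verify that the retained sets are exactly $\Omega_u^{(16)}\cup\Omega_u^{(8)}\cup\Omega_u^{(4)}$, with the 16-bit set identical in both allocations. I would also check that any retained prefix interactions are counted consistently in $\tau_p$ and $\delta$, so that the dense masses satisfy $\delta_{\rm v2}+\tau+\text{(16-bit mass)}=1$. Everything else is a direct substitution.
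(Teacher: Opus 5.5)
Your proposal is correct and follows essentially the same route as the paper's proof. You apply the lemma to the v1 support (skipped mass $\delta_{\rm v2}+\tau$, no low-bit mass) and to the v2 support with renormalized masses $m_u^{(p)}=\tau_p/(1-\delta_{\rm v2})$, then subtract the bounds and multiply through by $1-\delta_{\rm v2}>0$; this gives the equivalence that the paper also states ("precisely the condition").
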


\begin{remark}[Interpreting the Comparison]
\label{rem:v2-recovery-interpretation}
The two allocations preserve the same reference-precision blocks but have different omitted attention masses:
$\delta_{\rm v1}=\delta_{\rm v2}+\tau_4+\tau_8$.
Low-bit recovery reduces the sparsification bound by $2V_{\max}\tau$ and introduces the quantization term in Equation~\eqref{eq:v2-recovery-bounds}; the bound improves when the latter is smaller than the former.
For these matched allocations, the skipped fractions of video-region pairs satisfy $\rho_{\rm v1}=\rho_{\rm v2}+\rho_4+\rho_8$, where $\rho_p$ is the fraction assigned to precision $p$ by v2.
These block fractions differ from the attention masses above.
The comparison motivates computing moderately important interactions at low precision while leaving the least important interactions sparse.
Figure~\ref{fig:motivation}(b) illustrates the resulting trade-off at similar latency.
\end{remark}

\paragraph{Allocating precision within the retained support.}
Once the support is fixed, the same decomposition determines where higher precision is most useful.
For a retained region pair $(a,b)$, define its query-averaged mass
$W_{ab}=N_{\rm vid}^{-1}\sum_{u\in\mathcal R_a}\sum_{v\in\mathcal R_b}P^\Omega_{uv}$,
where $N_{\rm vid}$ is the number of video queries.

\begin{theorem}[Importance-Guided Precision Allocation]
\label{prop:v2-allocation}
Fix the retained support and precision quotas.
If $0\le\varepsilon_8\le\varepsilon_4$, then the assignment-dependent error bound
$\sum_{(a,b):\,p_{ab}\in\{4,8\}}W_{ab}\varepsilon_{p_{ab}}$
is minimized by assigning the 16-bit path to the $T_{16}$ largest masses, INT8 to the next $T_8$, and NVFP4 to the remainder.
\end{theorem}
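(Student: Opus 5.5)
We treat this as a finite assignment problem and solve it by an exchange argument. Write $c_{16}=0$, $c_8=\varepsilon_8$, $c_4=\varepsilon_4$. The 16-bit path contributes no quantization term in Lemma~\ref{thm:v2-quantization}, so the objective can be rewritten as $\sum_{(a,b)} W_{ab}\,c_{p_{ab}}$ over all retained pairs. The hypothesis then gives the ordering $0=c_{16}\le c_8\le c_4$.

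The support is fixed, and exactly $T_{16}$, $T_8$ and $T_4$ retained pairs receive each precision. A feasible assignment is therefore a bijection between the retained pairs and a multiset of costs with those multiplicities. We must show the objective is minimized when the largest weights $W_{ab}$ are paired with the smallest costs. This is the rearrangement inequality, which we prove directly by pairwise swaps.

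\textbf{Swap step.} Take any feasible assignment and two retained pairs $e,f$ with $W_e>W_f$ but $c_{p_e}>c_{p_f}$. Exchanging their precisions keeps every quota intact. It changes the objective by
\[
 (W_e c_{p_f}+W_f c_{p_e})-(W_e c_{p_e}+W_f c_{p_f})=-(W_e-W_f)(c_{p_e}-c_{p_f})<0 ,
\]
so the objective strictly decreases. An optimal assignment exists because there are finitely many assignments. At an optimum no such inverted pair can exist, which means costs are non-increasing along any ordering of the $W$'s.

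\textbf{Ties.} We must still show that the specific sorted assignment attains the optimum. Consider an inverted pair with $W_e=W_f$, or with $c_{p_e}=c_{p_f}$. Swapping it changes the objective by zero. Starting from any optimal assignment, we can therefore swap toward the sorted order (the stable descending order used in Algorithm~\ref{alg:v2-forward}) without increasing the objective. The number of inversions relative to the target order strictly decreases with each swap, so this process terminates at the sorted assignment. Hence the sorted assignment is optimal: top $T_{16}$ masses get 16-bit, the next $T_8$ get INT8, and the rest get NVFP4.

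The main obstacle is not this combinatorics but the link to the paper's setting. One must confirm that the per-query bound of Lemma~\ref{thm:v2-quantization}, averaged over video queries, yields exactly the stated objective. Concretely, $N_{\rm vid}^{-1}\sum_u (m_u^{(8)}\varepsilon_8+m_u^{(4)}\varepsilon_4)$ regroups as $\sum_{(a,b)} W_{ab}\varepsilon_{p_{ab}}$ by exchanging the sums over $u\in\mathcal R_a$ and $v\in\mathcal R_b$. This works because the precision is constant on each region-pair block, and the retained support (hence $P^\Omega$) does not depend on the precision assignment. I would state this regrouping explicitly, then apply the exchange argument above. The theorem concerns the true masses $W$; the draft $A$ is only a proxy for them.
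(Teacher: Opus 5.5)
Your proposal is correct and follows the paper's proof. The paper likewise sets $\varepsilon_{16}=0$ and obtains $\sum_{(a,b)}W_{ab}\varepsilon_{p_{ab}}$ by averaging Lemma~\ref{thm:v2-quantization} over video queries; it notes that prefix and other always-retained interactions contribute only constants, which your regrouping should also state. It then uses the same pairwise exchange with excess cost $(W_1-W_2)(\varepsilon_{\rm high}-\varepsilon_{\rm low})\ge0$, and your inversion-count termination and tie handling simply make its ``repeated exchanges'' step explicit.
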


The theorem motivates the precision ordering in Definition~\ref{def:v2-allocation}: larger attention mass makes an interaction more sensitive to quantization.
Computing the exact masses requires full-resolution attention, so the router uses draft scores $A_{ab}$ as low-cost proxies.
Together, the results motivate retaining more interactions at low precision while reserving higher precision for the most important blocks.
Proofs and the reconstruction-error model are in Appendix~\ref{app:v2-quantization}.

\subsection{Hardware-Efficient Execution}
\label{sec:hardware}

To efficiently execute the joint sparsity--precision allocation, spatial regions must align with contiguous attention blocks, while blocks assigned different precisions must share normalization and output accumulation.
We address these requirements through fused operand preparation and phased mixed-precision execution.

\begin{figure}[t]
 \centering
 \includegraphics[width=\linewidth]{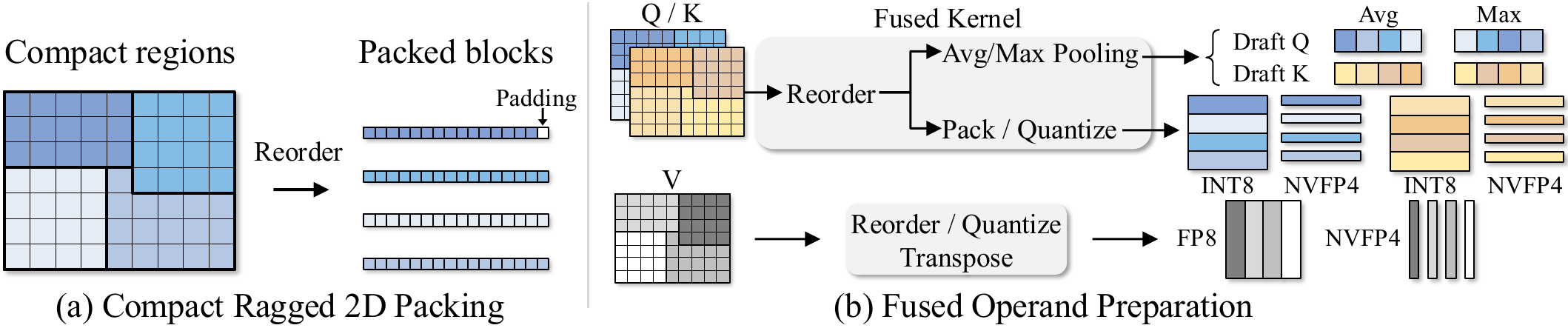}
 \vspace{-6mm}
 \caption{Compact spatial packing and fused operand preparation. (a) Compact spatial regions are packed into contiguous blocks. (b) Q/K reordering, pooling, and quantization share a kernel; V reordering and transposition are fused with operand packing and loading. Colors denote regions; gray cells denote padding.}
 \label{fig:fused-preparation}
 \vspace{-4mm}
\end{figure}

\paragraph{Fused quantization and reordering.}
Tokens belonging to one spatial region can be scattered in raster order, whereas attention kernels operate on fixed-size contiguous blocks.
We construct a ragged 2D partition with balanced token counts across regions, prioritizing the smallest total perimeter among candidate partitions (Figure~\ref{fig:fused-preparation}).
These criteria maximize block utilization while preserving 2D spatial locality.
The partition adapts region shapes to the frame dimensions and uses the minimum number of blocks, with less than one block of total padding per frame.
We pack each region into a contiguous block, using the same spatial indices for pooling and operand preparation.
The partition is reused across layers and denoising steps.

Computing pooled descriptors and preparing each numerical format separately would repeatedly move the same token features through global memory.
Our Q/K preparation kernel fuses spatial gathering, average- and max-pooling, and quantization, producing the draft descriptors together with packed operands and quantization scales.
The value path similarly fuses spatial reordering with operand preparation.
To reconcile the $QK^\top$ accumulator layout with the operand layout of the probability--value multiplication~\citep{shah2024flashattention3}, we align the intra-block $K/V$ layouts so that the attention weights can be converted and packed within each thread after softmax.
Value transposition is incorporated into operand packing and loading.
Thus, the data movement needed to prepare attention operands also supplies the draft descriptors, without standalone spatial reordering, pooling, or value-transposition kernels.

\paragraph{Phased execution.}
To limit precision switching, we group retained key blocks for each query block into NVFP4, INT8, and 16-bit segments.
A single attention kernel processes the nonempty segments in this order, using the corresponding operand-loading and matrix-multiplication path.
If $n_{\mathrm{phase}}$ phases are nonempty, the number of precision transitions is
\begin{equation}
 n_{\mathrm{switch}}=\max\!\left(n_{\mathrm{phase}}-1,0\right)\le 2.
 \label{eq:precision-switches}
\end{equation}
The number of precision transitions therefore depends only on the active formats, independently of the number of retained blocks.
Consecutive phases also reuse a shared-memory workspace, with operand fragments and quantization temporaries local to each phase.

\paragraph{Cross-precision accumulation.}
To accumulate contributions from different precisions into a common attention output, all phases share FP32 row maxima, exponential sums, and output accumulators using online softmax~\citep{dao2023flashattention2}.
Let $\widetilde P$ denote the unnormalized exponential weights, and let $U_c$ and $\ell$ be the output numerator in channel $c$ and the softmax denominator.
We rescale the accumulators at phase boundaries to account for the probability and value quantization scales.
For example, the INT8 path uses E4M3 operands for $\widetilde PV$.
With probability dequantization scale $\eta$ and nonzero per-channel value dequantization scale $\delta_c^V$, its entry and exit transformations are
\begin{equation}
 \begin{aligned}
 \text{Enter:}\quad
 U_c^{\mathrm{enc}}&=\frac{U_c}{\eta\delta_c^V},
 &\ell^{\mathrm{enc}}&=\frac{\ell}{\eta}, \quad \quad
 \text{Exit:}\quad
 U_c&=\eta\delta_c^V U_c^{\mathrm{enc}},
 &\ell&=\eta\ell^{\mathrm{enc}}.
 \end{aligned}
 \label{eq:precision-state-conversion}
\end{equation}
The normalized output satisfies $U_c/\ell=\delta_c^V U_c^{\mathrm{enc}}/\ell^{\mathrm{enc}}$, so each phase can update the shared output accumulator after rescaling.
The denominator accumulates exponential weights before their conversion to matrix operands, as in the reconstruction model of Section~\ref{sec:quantization-analysis}.
We normalize once after all retained blocks, avoiding per-precision output writes and a separate merging kernel.
Appendix~\ref{app:v2-implementation} details the spatial layout, quantization, and phase-specific scaling.

%% file: sections/4-experiments.tex
\section{Experiments}
\label{sec:experiments}
\setlength{\intextsep}{6pt}
\renewcommand{\floatpagefraction}{0.9}

\subsection{Experimental Setup}
\label{sec:experimental-setup}

\paragraph{Model family.}
We evaluate DraftAttention2 on HunyuanVideo-13B~\citep{kong2024hunyuanvideo} and MiniMax-H3~\citep{minimaxai2026minimaxh3},
including 4-step and 8-step MiniMax-H3 variants using FL2VA Turbo LoRAs~\citep{modeltc2026minimaxh3turbo}.
We generate 129-frame videos at $1280\times720$ resolution with HunyuanVideo and 362-frame videos at $1344\times768$ resolution with MiniMax-H3 and its few-step variants.
We retain dense attention for the first 10 denoising steps of the standard models and the first quarter of steps in the few-step variants.
We compare with dense attention, XAttention~\citep{xu2025xattention_efficient_sparseattn}, SVG2~\citep{svg2}, SVG-EAR~\citep{zhou2026svgear}, SVOO~\citep{luo2026svoo}, Sol-Attn~\citep{li2026solattn}, SpargeAttention~\citep{zhang2025spargeattn}, and SageAttention3~\citep{zhang2025sageattention3}.
For clustering-based baselines, we select the numbers of query and key clusters to obtain comparable inference speedups.

\paragraph{Metrics and prompts.}
We evaluate the quality of generated videos with VBench~\citep{huang2023vbench} and their similarity to dense-attention outputs with PSNR, SSIM, and LPIPS~\citep{zhang2018perceptual}.
We report eight VBench dimensions and their arithmetic mean.
For similarity evaluation, we use dense references generated with matching prompts and random seeds.
All videos are generated using prompts from VBench, and latency is measured on an NVIDIA RTX 5090 GPU.

\input{tables/main-results}
\input{tables/fewstep-results}

\subsection{Main Results}
\label{sec:main-results}

\paragraph{Higher performance on recent video diffusion models.}
Table~\ref{tab:main-results} compares generation quality and DiT speedup on HunyuanVideo and MiniMax-H3.
Compared with sparse-attention methods including XAttention, SVG2, Sol-Attn, SVG-EAR, and SVOO, DraftAttention2 achieves higher inference speed while maintaining competitive generation quality and fidelity to dense attention.
For example, our sparse configurations reach up to $2.38\times$ and $2.33\times$ speedup on HunyuanVideo and MiniMax-H3, respectively, exceeding the speedups of these sparse-only baselines.
We further introduce mixed-precision allocation to improve quality at higher acceleration.
At speedups matched to SpargeAttention, DraftAttention2 consistently achieves better generation quality and dense-output fidelity on both HunyuanVideo and MiniMax-H3.
We additionally compare against SageAttention3 under matched speedups, where DraftAttention2 achieves better generation quality with comparable acceleration.
These results demonstrate that jointly allocating sparsity and numerical precision provides a better quality--efficiency trade-off than sparse-only or uniform low-precision attention.

\paragraph{Strong performance on few-step generation.}
Table~\ref{tab:fewstep-results} evaluates DraftAttention2 on 4- and 8-step MiniMax-H3 generation, where aggressive attention approximation is more challenging because fewer denoising steps are available to correct accumulated errors.
Across both settings, DraftAttention2 achieves a stronger quality--efficiency trade-off than sparse-only baselines by replacing part of the sparsification with low-bit computation.
For 4-step generation, our mixed-precision configurations closely preserve dense quality while substantially improving fidelity over sparse baselines at similar acceleration.
For 8-step generation, DraftAttention2 exhibits the same trend, consistently improving generation quality and dense-output fidelity over sparse-only baselines while maintaining comparable or higher acceleration.
At speedups comparable to SpargeAttention, mixed 4/8-bit allocation further improves dense-output fidelity.
For SageAttention3, DraftAttention2 achieves better generation quality at comparable speedups, further demonstrating the advantage of selective sparsity--precision allocation over uniform low-bit attention.
These results highlight the particular benefit of joint sparsity and mixed-precision allocation for practical few-step video generation, where preserving moderately important interactions becomes increasingly critical.

\paragraph{Visualization.}
Figure~\ref{fig:qualitative-comparison} compares DraftAttention2 with competing methods on few-step video generation.
As highlighted by the red boxes, DraftAttention2 preserves the rail junction and the cat's appearance, producing results close to the dense baseline.

\begin{figure}[!t]
\centering
\includegraphics[width=\linewidth]{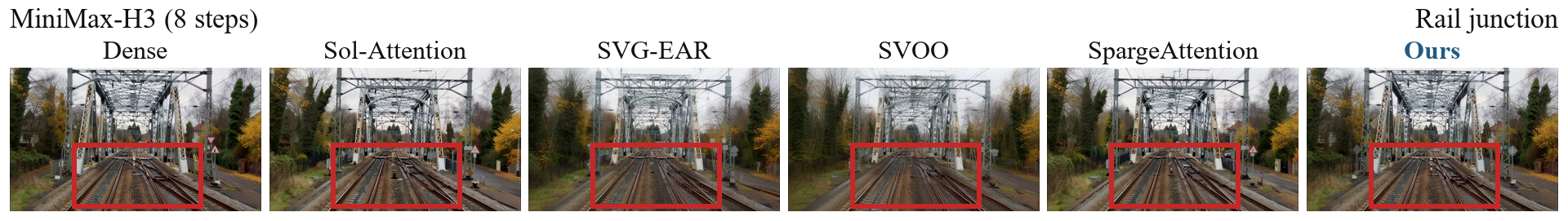}\par
\vspace{4pt}
\includegraphics[width=\linewidth]{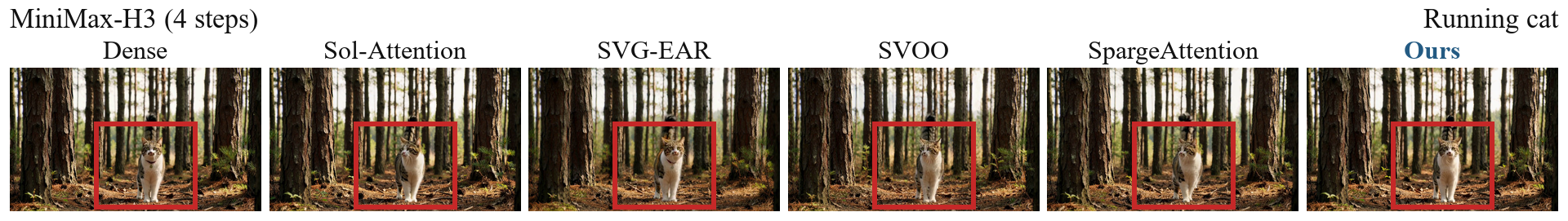}
\captionsetup{font=small,skip=3pt}
\caption{Visual comparison on 8-step (top) and 4-step (bottom) diffusion with MiniMax-H3 FL2VA Turbo LoRAs. Red boxes highlight structural details.}
\label{fig:qualitative-comparison}
\end{figure}

\subsection{Ablation Study}
\label{sec:ablation}

\begin{figure}[!t]
\begin{minipage}[t]{0.48\textwidth}
\vspace{0pt}
\input{tables/maxpool-ablation}
\end{minipage}\hfill
\begin{minipage}[t]{0.5\textwidth}
\vspace{0pt}
\centering
\includegraphics[width=\linewidth]{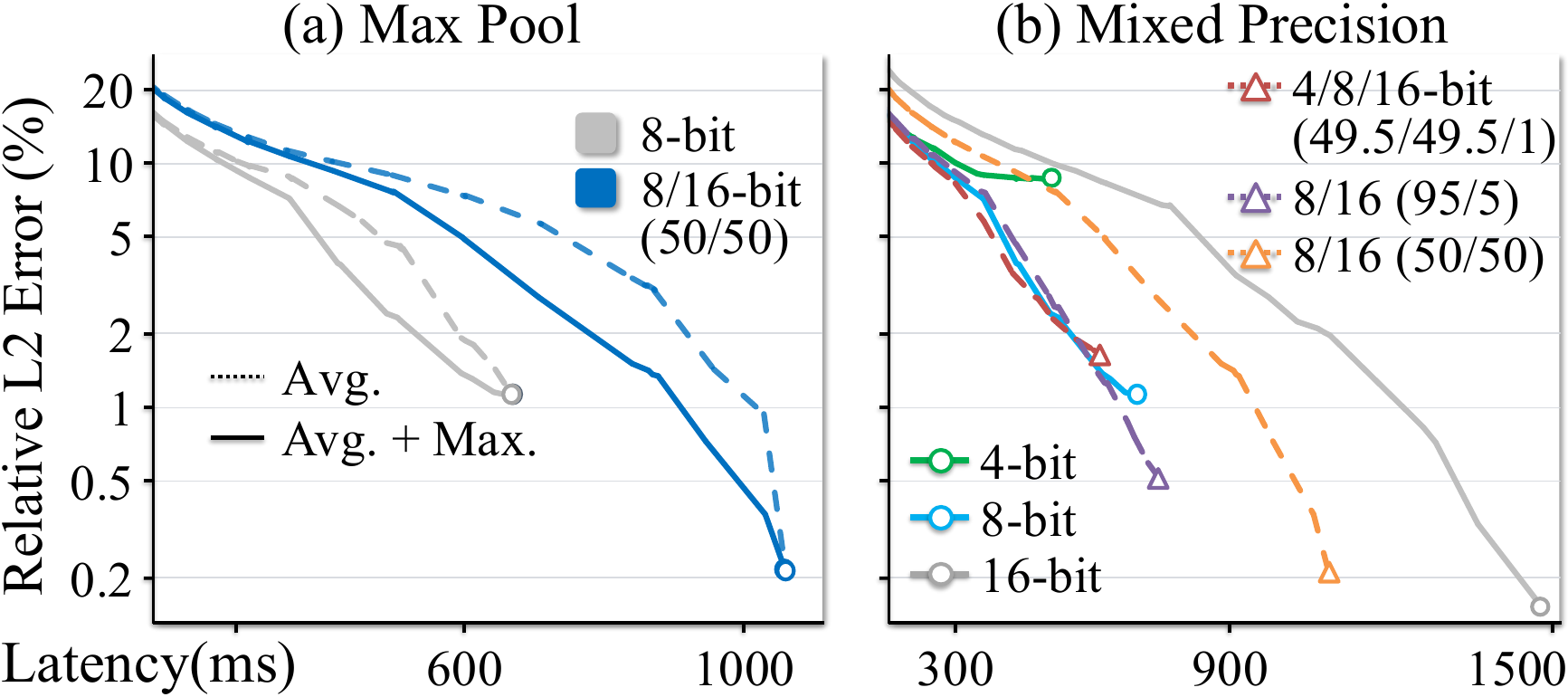}
\captionsetup{font=small,skip=3pt}
\captionof{figure}{Attention output error versus latency. (a) Average pooling with and without MaxPool. (b) Single- and mixed-precision attention with MaxPool. Curves vary sparsity at fixed precision ratios; open circles mark zero sparsity. Ratios are percentages of retained connections.}
\label{fig:operator-tradeoff}
\end{minipage}
\end{figure}

We study pooling and precision allocation on a dense 4-step trajectory capture with $C=128$.
We measure relative L2 error against the dense output and complete attention-operator latency, including operand preparation and routing.

\paragraph{Effect of MaxPool.}
Figure~\ref{fig:operator-tradeoff}(a) compares average pooling with the average--max mixture ($\lambda=0.2$).
Adding MaxPool improves the error--latency trade-off for both 8-bit and mixed 8/16-bit attention.
The improvement at a fixed block budget demonstrates the value of the complementary max-pooled draft for guiding block selection.
Table~\ref{tab:maxpool-ablation} further shows consistent improvements in video generation performance at fixed sparsity and precision budgets.
The improvements extend to both 16-bit and mixed 8/16-bit attention, showing that the additional guidance benefits different precision allocations.

\paragraph{Effect of mixed precision.}
Figure~\ref{fig:operator-tradeoff}(b) shows the benefit of a small high-precision budget.
Across a range of latency budgets, introducing a small fraction of higher-precision computation consistently lowers attention output error compared with single-precision alternatives.
Selective high-precision computation thus reduces error beyond the zero-sparsity endpoint of pure 8-bit attention.
Reducing sparsity alone cannot remove the quantization error that remains when every block is computed at low precision.
Draft-guided precision allocation addresses this error by reserving higher precision for important interactions, improving accuracy within the measured attention latency budget.

%% file: tables/main-results.tex
\begin{table}[!t]
\centering
\captionsetup{font=small,skip=3pt}
\caption{Video generation quality and DiT speedup. VBench Avg is reported as a percentage. Bit allocations give percentages of 0-, 4-, 8-, and 16-bit attention blocks, where 0-bit denotes sparse and 16-bit denotes dense.}
\label{tab:main-results}
\fontsize{7.5}{9}\selectfont
\setlength{\tabcolsep}{0.9pt}
\renewcommand{\arraystretch}{1.00}
\setlength{\heavyrulewidth}{0.6pt}
\setlength{\lightrulewidth}{0.3pt}
\arrayrulecolor{black!65}
\definecolor{oursrow}{RGB}{224,237,250}
\resizebox{\linewidth}{!}{%
\begin{tabular}{@{}c@{\hspace{3pt}{\color{black!35}\vrule width 0.3pt}\hspace{3pt}}c@{}}
\toprule
\textbf{HunyuanVideo} & \textbf{MiniMax-H3} \\
\midrule
\begin{tabular}[t]{@{}l!{\color{black!35}\vrule width 0.3pt}c!{\color{black!35}\vrule width 0.3pt}*{5}{c}@{}}
Method & \makecell{Bit (\%)\\0 / 4 / 8 / 16} & \makecell{VBench\\Avg$\uparrow$} & \makecell{PSNR\\$\uparrow$} & \makecell{SSIM\\$\uparrow$} & \makecell{LPIPS\\$\downarrow$} & \makecell{Speedup\\$\uparrow$} \\
\midrule
\rowcolor{white}[0.9pt][0.9pt]
Dense & 0/0/0/100 & 77.06 & -- & -- & -- & $1.00\times$ \\
\midrule
\rowcolor{white}[0.9pt][0.9pt]
XAttn & 85.6/0/0/14.4 & 73.90 & 22.90 & 0.679 & 0.251 & $1.74\times$ \\
\rowcolor{white}[0.9pt][0.9pt]
SVG2 & 83.6/0/0/16.4 & 65.70 & 24.34 & 0.711 & 0.382 & $2.06\times$ \\
\rowcolor{white}[0.9pt][0.9pt]
Sol-Attn & 85.8/0/0/14.2 & 76.81 & 25.21 & 0.805 & 0.178 & $2.16\times$ \\
\rowcolor{white}[0.9pt][0.9pt]
SVG-EAR & 85.2/0/0/14.8 & 66.49 & 25.03 & 0.739 & 0.348 & $2.06\times$ \\
\rowcolor{white}[0.9pt][0.9pt]
SVOO & 85.3/0/0/14.7 & 71.37 & 26.72 & 0.816 & 0.174 & $2.15\times$ \\
\rowcolor{oursrow}[0.9pt][0.9pt]
\textbf{Ours} & 85/0/0/15 & 74.55 & 26.19 & 0.822 & 0.151 & $2.28\times$ \\
\rowcolor{oursrow}[0.9pt][0.9pt]
\textbf{Ours} & 85/0/7.5/7.5 & 74.70 & 26.23 & 0.824 & 0.149 & $2.38\times$ \\
\midrule
\rowcolor{white}[0.9pt][0.9pt]
SpargeAttn & 85/0/15/0 & 74.36 & 26.02 & 0.824 & 0.159 & $2.50\times$ \\
\rowcolor{oursrow}[0.9pt][0.9pt]
\textbf{Ours} & 85/0/15/0 & 74.83 & 25.20 & 0.797 & 0.155 & $2.50\times$ \\
\rowcolor{oursrow}[0.9pt][0.9pt]
\textbf{Ours} & 80.7/14.4/4.8/0 & 75.96 & 26.37 & 0.814 & 0.157 & $2.51\times$ \\
\midrule
\rowcolor{white}[0.9pt][0.9pt]
SageAttn3 & 0/100/0/0 & 75.42 & 32.30 & 0.919 & 0.068 & $1.88\times$ \\
\rowcolor{oursrow}[0.9pt][0.9pt]
\textbf{Ours} & 13.4/64.9/21.6/0 & 75.75 & 34.77 & 0.945 & 0.043 & $1.90\times$ \\
\end{tabular}
&
\begin{tabular}[t]{@{}l!{\color{black!35}\vrule width 0.3pt}c!{\color{black!35}\vrule width 0.3pt}*{5}{c}@{}}
Method & \makecell{Bit (\%)\\0 / 4 / 8 / 16} & \makecell{VBench\\Avg$\uparrow$} & \makecell{PSNR\\$\uparrow$} & \makecell{SSIM\\$\uparrow$} & \makecell{LPIPS\\$\downarrow$} & \makecell{Speedup\\$\uparrow$} \\
\midrule
\rowcolor{white}[0.9pt][0.9pt]
Dense & 0/0/0/100 & 78.37 & -- & -- & -- & $1.00\times$ \\
\midrule
\rowcolor{white}[0.9pt][0.9pt]
XAttn & 79.9/0/0/20.1 & 78.03 & 19.64 & 0.622 & 0.267 & $1.76\times$ \\
\rowcolor{white}[0.9pt][0.9pt]
SVG2 & 79.9/0/0/20.1 & 77.02 & 19.87 & 0.626 & 0.293 & $1.99\times$ \\
\rowcolor{white}[0.9pt][0.9pt]
Sol-Attn & 80/0/0/20 & 78.25 & 20.95 & 0.691 & 0.198 & $2.04\times$ \\
\rowcolor{white}[0.9pt][0.9pt]
SVG-EAR & 79.9/0/0/20.1 & 76.97 & 20.40 & 0.644 & 0.275 & $1.97\times$ \\
\rowcolor{white}[0.9pt][0.9pt]
SVOO & 79.9/0/0/20.1 & 78.07 & 20.56 & 0.662 & 0.238 & $2.04\times$ \\
\rowcolor{oursrow}[0.9pt][0.9pt]
\textbf{Ours} & 80/0/0/20 & 77.90 & 21.01 & 0.692 & 0.199 & $2.23\times$ \\
\rowcolor{oursrow}[0.9pt][0.9pt]
\textbf{Ours} & 80/0/10/10 & 77.84 & 20.95 & 0.690 & 0.200 & $2.33\times$ \\
\midrule
\rowcolor{white}[0.9pt][0.9pt]
SpargeAttn & 80/0/20/0 & 77.47 & 20.23 & 0.662 & 0.232 & $2.59\times$ \\
\rowcolor{oursrow}[0.9pt][0.9pt]
\textbf{Ours} & 80/0/20/0 & 78.15 & 20.29 & 0.667 & 0.216 & $2.47\times$ \\
\rowcolor{oursrow}[0.9pt][0.9pt]
\textbf{Ours} & 74.3/19.3/6.4/0 & 77.74 & 21.02 & 0.690 & 0.205 & $2.60\times$ \\
\midrule
\rowcolor{white}[0.9pt][0.9pt]
SageAttn3 & 0/100/0/0 & 77.40 & 23.75 & 0.779 & 0.149 & $1.94\times$ \\
\rowcolor{oursrow}[0.9pt][0.9pt]
\textbf{Ours} & 13.6/64.8/21.6/0 & 77.43 & 25.07 & 0.817 & 0.122 & $1.95\times$ \\
\end{tabular}
\\
\bottomrule
\end{tabular}%
}
\end{table}

%% file: tables/fewstep-results.tex
\begin{table}[!t]
\centering
\captionsetup{font=small,skip=3pt}
\caption{Video generation quality and DiT speedup for MiniMax-H3 with FL2VA Turbo LoRAs ($1344\times768$, 362 frames). Bit allocations follow the 0 / 4 / 8 / 16 order; 0-bit denotes sparse and 16-bit denotes dense.}
\label{tab:fewstep-results}
\fontsize{7.5}{9}\selectfont
\setlength{\tabcolsep}{0.9pt}
\renewcommand{\arraystretch}{1.00}
\setlength{\heavyrulewidth}{0.6pt}
\setlength{\lightrulewidth}{0.3pt}
\arrayrulecolor{black!65}
\definecolor{oursrow}{RGB}{224,237,250}
\resizebox{\linewidth}{!}{%
\begin{tabular}{@{}c@{\hspace{3pt}{\color{black!35}\vrule width 0.3pt}\hspace{3pt}}c@{}}
\toprule
\textbf{4-Step Diffusion} & \textbf{8-Step Diffusion} \\
\midrule
\begin{tabular}[t]{@{}l!{\color{black!35}\vrule width 0.3pt}c!{\color{black!35}\vrule width 0.3pt}*{5}{c}@{}}
Method & \makecell{Bit (\%)\\0 / 4 / 8 / 16} & \makecell{VBench\\Avg$\uparrow$} & \makecell{PSNR\\$\uparrow$} & \makecell{SSIM\\$\uparrow$} & \makecell{LPIPS\\$\downarrow$} & \makecell{Speedup\\$\uparrow$} \\
\midrule
\rowcolor{white}[0.9pt][0.9pt]
Dense & 0/0/0/100 & 74.14 & -- & -- & -- & $1.00\times$ \\
\midrule
\rowcolor{white}[0.9pt][0.9pt]
Sol-Attn & 65/0/0/35 & 74.19 & 19.54 & 0.664 & 0.191 & $1.46\times$ \\
\rowcolor{white}[0.9pt][0.9pt]
SVG2 & 64/0/0/36 & 73.80 & 19.02 & 0.603 & 0.231 & $1.33\times$ \\
\rowcolor{white}[0.9pt][0.9pt]
SVG2 & 85.8/0/0/14.2 & 73.13 & 17.70 & 0.511 & 0.306 & $1.59\times$ \\
\rowcolor{white}[0.9pt][0.9pt]
SVG-EAR & 66.4/0/0/33.6 & 73.81 & 19.39 & 0.619 & 0.224 & $1.32\times$ \\
\rowcolor{white}[0.9pt][0.9pt]
SVG-EAR & 86.1/0/0/13.9 & 73.73 & 18.28 & 0.543 & 0.283 & $1.59\times$ \\
\rowcolor{white}[0.9pt][0.9pt]
SVOO & 65/0/0/35 & 73.80 & 19.59 & 0.637 & 0.217 & $1.44\times$ \\
\rowcolor{white}[0.9pt][0.9pt]
SVOO & 76.3/0/0/23.7 & 73.74 & 18.41 & 0.561 & 0.267 & $1.60\times$ \\
\rowcolor{oursrow}[0.9pt][0.9pt]
\textbf{Ours} & 65/0/0/35 & 74.11 & 20.19 & 0.691 & 0.175 & $1.63\times$ \\
\rowcolor{oursrow}[0.9pt][0.9pt]
\textbf{Ours} & 63.4/0/18.3/18.3 & 73.89 & 19.84 & 0.671 & 0.188 & $1.66\times$ \\
\rowcolor{oursrow}[0.9pt][0.9pt]
\textbf{Ours} & 53.2/0/37.5/9.4 & 73.85 & 20.60 & 0.702 & 0.173 & $1.64\times$ \\
\rowcolor{oursrow}[0.9pt][0.9pt]
\textbf{Ours} & 29.6/35.2/35.2/0 & 74.05 & 22.38 & 0.779 & 0.128 & $1.65\times$ \\
\midrule
\rowcolor{white}[0.9pt][0.9pt]
SpargeAttn & 65/0/35/0 & 73.74 & 19.26 & 0.641 & 0.206 & $1.86\times$ \\
\rowcolor{oursrow}[0.9pt][0.9pt]
\textbf{Ours} & 65/0/35/0 & 73.65 & 19.21 & 0.616 & 0.239 & $1.85\times$ \\
\rowcolor{oursrow}[0.9pt][0.9pt]
\textbf{Ours} & 54.5/34.1/11.4/0 & 74.00 & 20.28 & 0.687 & 0.179 & $1.84\times$ \\
\midrule
\rowcolor{white}[0.9pt][0.9pt]
SageAttn3 & 0/100/0/0 & 74.00 & 21.06 & 0.721 & 0.162 & $1.65\times$ \\
\rowcolor{oursrow}[0.9pt][0.9pt]
\textbf{Ours} & 13.1/65.2/21.7/0 & 73.83 & 22.63 & 0.786 & 0.126 & $1.64\times$ \\
\end{tabular}
&
\begin{tabular}[t]{@{}l!{\color{black!35}\vrule width 0.3pt}c!{\color{black!35}\vrule width 0.3pt}*{5}{c}@{}}
Method & \makecell{Bit (\%)\\0 / 4 / 8 / 16} & \makecell{VBench\\Avg$\uparrow$} & \makecell{PSNR\\$\uparrow$} & \makecell{SSIM\\$\uparrow$} & \makecell{LPIPS\\$\downarrow$} & \makecell{Speedup\\$\uparrow$} \\
\midrule
\rowcolor{white}[0.9pt][0.9pt]
Dense & 0/0/0/100 & 79.87 & -- & -- & -- & $1.00\times$ \\
\midrule
\rowcolor{white}[0.9pt][0.9pt]
Sol-Attn & 79.9/0/0/20.1 & 79.91 & 18.19 & 0.575 & 0.262 & $1.80\times$ \\
\rowcolor{white}[0.9pt][0.9pt]
SVG2 & 79.6/0/0/20.4 & 79.34 & 17.80 & 0.531 & 0.333 & $1.66\times$ \\
\rowcolor{white}[0.9pt][0.9pt]
SVG2 & 86.7/0/0/13.3 & 79.04 & 17.36 & 0.509 & 0.371 & $1.81\times$ \\
\rowcolor{white}[0.9pt][0.9pt]
SVG-EAR & 79.6/0/0/20.4 & 79.51 & 18.39 & 0.557 & 0.309 & $1.68\times$ \\
\rowcolor{white}[0.9pt][0.9pt]
SVG-EAR & 88.4/0/0/11.6 & 78.81 & 17.87 & 0.530 & 0.350 & $1.82\times$ \\
\rowcolor{white}[0.9pt][0.9pt]
SVOO & 80.1/0/0/19.9 & 79.24 & 17.99 & 0.532 & 0.338 & $1.80\times$ \\
\rowcolor{oursrow}[0.9pt][0.9pt]
\textbf{Ours} & 80/0/0/20 & 79.88 & 18.75 & 0.596 & 0.247 & $1.84\times$ \\
\rowcolor{oursrow}[0.9pt][0.9pt]
\textbf{Ours} & 74.3/0/12.8/12.8 & 79.85 & 18.85 & 0.601 & 0.243 & $1.87\times$ \\
\rowcolor{oursrow}[0.9pt][0.9pt]
\textbf{Ours} & 57.9/0/42.1/0 & 79.69 & 19.15 & 0.615 & 0.245 & $1.87\times$ \\
\rowcolor{oursrow}[0.9pt][0.9pt]
\textbf{Ours} & 40.5/59.5/0/0 & 79.73 & 19.75 & 0.633 & 0.240 & $1.87\times$ \\
\rowcolor{oursrow}[0.9pt][0.9pt]
\textbf{Ours} & 50.8/24.6/24.6/0 & 79.82 & 20.76 & 0.690 & 0.188 & $1.87\times$ \\
\midrule
\rowcolor{white}[0.9pt][0.9pt]
SpargeAttn & 80/0/20/0 & 79.70 & 18.15 & 0.566 & 0.277 & $2.12\times$ \\
\rowcolor{oursrow}[0.9pt][0.9pt]
\textbf{Ours} & 80/0/20/0 & 79.81 & 18.46 & 0.582 & 0.262 & $2.12\times$ \\
\rowcolor{oursrow}[0.9pt][0.9pt]
\textbf{Ours} & 74/13/13/0 & 79.89 & 18.83 & 0.599 & 0.247 & $2.12\times$ \\
\midrule
\rowcolor{white}[0.9pt][0.9pt]
SageAttn3 & 0/100/0/0 & 79.72 & 20.99 & 0.693 & 0.195 & $1.73\times$ \\
\rowcolor{oursrow}[0.9pt][0.9pt]
\textbf{Ours} & 13/65.3/21.8/0 & 79.80 & 22.57 & 0.758 & 0.152 & $1.73\times$ \\
\end{tabular}
\\
\bottomrule
\end{tabular}%
}
\end{table}

%% file: tables/maxpool-ablation.tex
\centering
\captionsetup{font=small,skip=4pt}
\captionof{table}{MaxPool ablation on MiniMax-H3 at 80\% sparsity. The 8/16-bit configuration assigns equal budgets to the two precisions.}
\label{tab:maxpool-ablation}
\fontsize{8}{9.5}\selectfont
\setlength{\tabcolsep}{2.5pt}
\renewcommand{\arraystretch}{1.08}
\begin{tabular}{@{}cccccc@{}}
\toprule
Bit & \makecell{Max\\Pool} & \makecell{VBench\\Avg. (\%)$\uparrow$} & \makecell{PSNR\\$\uparrow$} & \makecell{SSIM\\$\uparrow$} & \makecell{LPIPS\\$\downarrow$} \\
\midrule
\multirow{2}{*}{16} & w/o & 78.20 & 20.89 & 0.687 & 0.202 \\
 & w/ & 77.90 & 21.01 & 0.692 & 0.199 \\
\midrule
\multirow{2}{*}{8/16} & w/o & 77.68 & 20.82 & 0.684 & 0.204 \\
 & w/ & 77.84 & 20.95 & 0.690 & 0.200 \\
\midrule
\multirow{2}{*}{8} & w/o & 78.11 & 20.15 & 0.660 & 0.221 \\
 & w/ & 78.15 & 20.29 & 0.667 & 0.216 \\
\bottomrule
\end{tabular}

%% file: sections/5-conclusion.tex
\section{Conclusion}

We presented DraftAttention2, a training-free attention acceleration framework that uses a low-resolution draft to jointly allocate sparsity and numerical precision in video diffusion.
The draft map provides a unified importance ranking for both block selection and precision assignment, enabling adaptive allocation under configurable computation budgets.
Our theoretical analysis decomposes attention-output error into sparsification and attention-weighted quantization terms, showing that recovering selected sparse interactions with low-bit computation can substantially tighten the output-error bound and yield a tighter bound than the original DraftAttention under matched allocations.
Fused operand preparation and shared accumulation across precision phases reduce the overhead of executing these fine-grained assignments, translating the proposed allocation into practical acceleration.
Experimental results on HunyuanVideo and MiniMax-H3 demonstrate a strong quality--efficiency trade-off over existing efficient attention methods.
Notably, DraftAttention2 shows particularly strong advantages on few-step video diffusion, where mixed-precision recovery substantially improves generation quality and dense-output fidelity while retaining significant acceleration.

%% file: sections/6-appendix.tex
\clearpage
\appendix
\section*{Appendix}

\section{Additional Experimental Details}
\label{sec:experimental-details}

\paragraph{Generation and attention settings.}
We use the released HunyuanVideo-13B and MiniMax-H3 checkpoints.
For HunyuanVideo, we use 50 denoising steps, a guidance scale of 6, and a timestep shift of 7.
MiniMax-H3 follows its released sampling schedule with a video timestep shift of 12.
The few-step models apply FL2VA Turbo 4-step and 8-step v1.0 768p LoRAs to MiniMax-H3, with a video timestep shift of 6 for both variants.
The standard models retain dense attention for the first 10 denoising steps.
For the few-step models, we retain dense attention for the first step in the 4-step variant and the first two steps in the 8-step variant, as well as the first two layers of every step.
We measure latency over the complete DiT denoising trajectory, including attention preparation and retained dense attention, and report speedup relative to the corresponding dense baseline.

\paragraph{Quality evaluation.}
We evaluate generation quality using the VBench implementation and the prompt sets associated with the reported dimensions.
The eight dimensions are subject consistency (SC), background consistency (BC), temporal flickering (TF), motion smoothness (MS), aesthetic quality (AQ), imaging quality (IQ), dynamic degree (DD), and overall consistency (OC).
We report each score as a percentage and use their arithmetic mean as Avg.
To measure fidelity to dense generation, each output is paired with its dense-attention reference using the same prompt and random seed.
PSNR and SSIM are computed on RGB frames normalized to $[0,1]$, and LPIPS uses the AlexNet backbone.
We evaluate similarity over all frames, average the frame scores within each video, and then average across videos.

\paragraph{Attention error analysis.}
For attention error analysis, we use a dense-trajectory capture from the 4-step model, with 109,376 tokens, 56 heads, and head dimension 128.
Relative L2 error is $\lVert\widehat O-O^{\mathrm{dense}}\rVert_F/\lVert O^{\mathrm{dense}}\rVert_F$, reported as a percentage.
Latency is the median of three measurements of the complete attention operator.
Each curve connects non-dominated measurements at a fixed precision ratio as sparsity varies.
For comparisons under a latency budget, we select the lowest measured error among configurations whose measured latency does not exceed that budget; the single-precision reference includes the 4-, 8-, and 16-bit paths.

\input{tables/vbench-main-results}
\input{tables/vbench-fewstep-results}

\section{Quantization Analysis and Proofs}
\label{app:v2-quantization}

We analyze one attention head and one video query at a time.
All sums exclude padding and include the retained prefix interactions.
The support is fixed by the router before the mixed-precision computation.
As in DraftAttention~\citep{shen2025draftattention}, we separate the effect of removing interactions from the approximation on the retained support.
Tracking both the output numerator and the softmax denominator also underlies the approximation analysis in Sol-Attn~\citep{li2026solattn}.
Here the retained interactions are computed with quantized operands, and the analysis explicitly accounts for the different weights used by the denominator and the probability--value multiplication.

\subsection{Reconstruction-Error Model}
\label{app:v2-error-model}

We take the stored BF16 tensors $Q,K,V$ as reference inputs.
The reconstruction model isolates additional low-bit quantization: on the 16-bit path, $\widehat s_{uv}=s_{uv}$, $\widehat V_v=V_v$, and $\widehat w_{uv}=w_{uv}$.
Thus $\alpha_{ub}=\nu_{ub}=\omega_{ub}=0$ on this path.
Ordinary floating-point effects, including any reference-path format conversion, are accounted for separately in $\varepsilon_u^{\rm arith}$ below.
For completeness, the exact dense operator on these reference inputs is
\begin{equation}
 P=\operatorname{softmax}_{\rm row}(QK^\top/\sqrt d),
 \qquad O^{\rm dense}=PV.
 \label{eq:v2-dense}
\end{equation}
For a query $u$, write $s_{uv}=Q_u^\top K_v/\sqrt d$ and
$P^\Omega_{uv}=e^{s_{uv}}/\sum_{r\in\Omega_u}e^{s_{ur}}$ on its retained support.
For a retained key block $\mathcal B_b$, write
$m_{ub}=\sum_{v\in\mathcal B_b}P^\Omega_{uv}$.
The precision-level mass in Equation~\eqref{eq:v2-block-mass} is
$m_u^{(p)}=\sum_{b:p_{ub}=p}m_{ub}$.
Let $\widehat s_{uv}$ and $\widehat V_v$ be the reconstructed logit and value.
At a common row reference $h_u$, set $w_{uv}=e^{\widehat s_{uv}-h_u}$.
The denominator sums $w_{uv}$ before weight quantization; the numerator uses reconstructed weights $\widehat w_{uv}$, including subsequent online rescalings.
For each retained block, assume
\begin{equation}
 \begin{aligned}
 |\widehat s_{uv}-s_{uv}|&\le\alpha_{ub},
 &\|\widehat V_v-V_v\|_2&\le\nu_{ub},\\
 \sum_{v\in\mathcal B_b}|\widehat w_{uv}-w_{uv}|
 &\le\omega_{ub}\sum_{v\in\mathcal B_b}w_{uv}.
 \end{aligned}
 \label{eq:v2-quant-errors}
\end{equation}
These are bounds on reconstructed operands in their original units.
Define $\alpha_u=\max_b\alpha_{ub}$ and the block error cost
\begin{equation}
 c_{ub}=2V_{\max}(e^{\alpha_{ub}}-1)
 +e^{\alpha_{ub}}\bigl[\nu_{ub}+(V_{\max}+\nu_{ub})\omega_{ub}\bigr].
 \label{eq:v2-quant-cost}
\end{equation}
For the format-level statement in the main text, assume common bounds
$\alpha_{ub}\le\alpha_p$, $\nu_{ub}\le\nu_p$, and $\omega_{ub}\le\omega_p$
whenever block $b$ uses low-bit precision $p\in\{4,8\}$.
These bounds cover all admissible assignments under consideration.
With $\bar\alpha=\max(\alpha_4,\alpha_8)$, define
\begin{equation}
 \varepsilon_p=e^{\bar\alpha}
 \left\{2V_{\max}(e^{\alpha_p}-1)
 +e^{\alpha_p}\bigl[\nu_p+(V_{\max}+\nu_p)\omega_p\bigr]\right\}.
 \label{eq:v2-format-envelope}
\end{equation}
For notation in the proofs, set $\varepsilon_{16}=0$: reference-precision blocks have no additional low-bit quantization cost.
For small errors, the leading terms are
$2V_{\max}\alpha_p+\nu_p+V_{\max}\omega_p$.
The ordering of these envelopes is an assumption of Theorem~\ref{prop:v2-allocation}, not a consequence of bit width alone.

\subsection{Reconstructed Operands and Logit Error}
\label{app:v2-operands}

For a fixed query $u$, the retained key supports $\{\mathcal B_b\}$ are disjoint and partition $\Omega_u$.
All quantities below are expressed in the original numerical units after restoring their quantization scales.
For a block assigned precision $p$, let
\[
 \widehat Q_u^{(p)}=Q_u+\Delta Q_u^{(p)},\qquad
 \widehat K_v^{(p)}=K_v+\Delta K_v^{(p)},\qquad
 \widehat V_v=V_v+\Delta V_v.
\]
The reconstructed logit is
$\widehat s_{uv}=(\widehat Q_u^{(p)})^\top\widehat K_v^{(p)}/\sqrt d$.
Expanding the inner product gives
\begin{equation}
 |\widehat s_{uv}-s_{uv}|
 \le
 \frac{
 \|Q_u\|_2\|\Delta K_v^{(p)}\|_2+
 \|K_v\|_2\|\Delta Q_u^{(p)}\|_2+
 \|\Delta Q_u^{(p)}\|_2\|\Delta K_v^{(p)}\|_2
 }{\sqrt d}.
 \label{eq:v2-logit-quant}
\end{equation}
Taking the maximum over $v\in\mathcal B_b$ provides a valid $\alpha_{ub}$ in Equation~\eqref{eq:v2-quant-errors}.
The value error is bounded by
$\nu_{ub}=\max_{v\in\mathcal B_b}\|\Delta V_v\|_2$.
For the low-bit paths, these residuals include errors from operand conversion and the represented quantization scales.

For a uniform nearest-integer quantizer with recovery step $\Delta$ and no clipping, each reconstructed scalar has error at most $\Delta/2$.
A $d$-dimensional vector with a common step therefore has error at most $\sqrt d\,\Delta/2$.
This observation applies to the INT8 operand path under its range assumptions.
Floating-point formats such as E2M1 and E4M3 have nonuniform spacing; for these paths, Equation~\eqref{eq:v2-logit-quant} uses the actual reconstruction residuals.
The theoretical costs need not be computed by the router.

\subsection{Weight Quantization under Shared Normalization}
\label{app:v2-shared-normalization}

Choose a common final row reference $h_u$, for example the maximum reconstructed logit on $\Omega_u$, and write
\[
 w_{uv}=e^{\widehat s_{uv}-h_u},\qquad
 Z_u^{\rm q}=\sum_{v\in\Omega_u}w_{uv}.
\]
The denominator uses these weights before their conversion to the operand format of the probability--value multiplication.
Let $\widehat w_{uv}$ denote the reconstructed weight used in that multiplication, expressed at the same final reference.
Then, in exact arithmetic on these reconstructed operands,
\begin{equation}
 \widehat O_u=
 \frac{\sum_{v\in\Omega_u}\widehat w_{uv}\widehat V_v}
      {Z_u^{\rm q}}.
 \label{eq:v2-reconstructed-output}
\end{equation}
In general, $\sum_v\widehat w_{uv}$ differs from $Z_u^{\rm q}$.
Equation~\eqref{eq:v2-reconstructed-output} retains this distinction.

To relate this representation to online execution, suppose block $b$ is processed at running row maximum $h_u^{(b)}$.
Its unquantized weights are $e^{\widehat s_{uv}-h_u^{(b)}}$.
Every later increase of the row maximum rescales the previous numerator and denominator by the same factor.
The product of these factors is $e^{h_u^{(b)}-h_u}$.
Thus the block's final denominator contribution is $\sum_{v\in\mathcal B_b}w_{uv}$, and its reconstructed numerator weights are the locally quantized weights multiplied by $e^{h_u^{(b)}-h_u}$.
This defines $\widehat w_{uv}$ for the actual traversal order.
In particular, the block-relative error
\[
 \frac{\sum_{v\in\mathcal B_b}|\widehat w_{uv}-w_{uv}|}
      {\sum_{v\in\mathcal B_b}w_{uv}}
\]
is unchanged by these common rescalings.

The phase-boundary transformations in Equation~\eqref{eq:precision-state-conversion} are changes of numerical units.
In exact arithmetic, the entry and exit scalings are inverse transformations, and the normalized output is preserved after restoring the value scale.
Consequently, all precision phases contribute to the single numerator and denominator in Equation~\eqref{eq:v2-reconstructed-output}.
Rounding in exponentiation, scale conversion, matrix accumulation, and final output conversion is accounted for separately below.

\subsection{Proof of the Attention-Weighted Error Bound}
\label{app:v2-error-proof}

\begin{proof}[Proof of Lemma~\ref{thm:v2-quantization}]
Fix a query $u$ with nonempty retained support and define its exact retained-support output by
\[
 O_u^\Omega=\sum_{v\in\Omega_u}P^\Omega_{uv}V_v.
\]
We first bound the support-removal error.
Let
$D_u=\sum_{v\notin\Omega_u}P_{uv}V_v$.
Since the retained dense probability mass is $1-\delta_u$,
\[
 O_u^{\rm dense}=(1-\delta_u)O_u^\Omega+D_u.
\]
The value bound implies
$\|D_u\|_2\le\delta_u V_{\max}$ and $\|O_u^\Omega\|_2\le V_{\max}$, hence
\begin{equation}
 \|O_u^{\rm dense}-O_u^\Omega\|_2\le2V_{\max}\delta_u.
 \label{eq:v2-support-proof}
\end{equation}

Next, let $z_{uv}=e^{s_{uv}-h_u}$ and $Z_u=\sum_{v\in\Omega_u}z_{uv}$.
Thus $P^\Omega_{uv}=z_{uv}/Z_u$ and
$\sum_{v\in\Omega_u}z_{uv}(V_v-O_u^\Omega)=0$.
Subtracting $O_u^\Omega$ from Equation~\eqref{eq:v2-reconstructed-output} yields the exact identity
\begin{align}
 Z_u^{\rm q}(\widehat O_u-O_u^\Omega)
 &=
 \sum_{v\in\Omega_u}(w_{uv}-z_{uv})(V_v-O_u^\Omega)
 \nonumber\\
 &\quad+
 \sum_{v\in\Omega_u}w_{uv}(\widehat V_v-V_v)
 +
 \sum_{v\in\Omega_u}(\widehat w_{uv}-w_{uv})\widehat V_v.
 \label{eq:v2-numerator-decomposition}
\end{align}
The three terms respectively account for logit, value, and exponential-weight quantization, including the change in normalization caused by the perturbed logits.

For $v\in\mathcal B_b$, the logit bound gives
\[
 |w_{uv}-z_{uv}|\le z_{uv}(e^{\alpha_{ub}}-1),
 \qquad
 w_{uv}\le e^{\alpha_{ub}}z_{uv}.
\]
Also,
$\|V_v-O_u^\Omega\|_2\le2V_{\max}$ and
$\|\widehat V_v\|_2\le V_{\max}+\nu_{ub}$.
Applying these inequalities and Equation~\eqref{eq:v2-quant-errors} to the contribution of each block in Equation~\eqref{eq:v2-numerator-decomposition} gives
\[
 Z_u^{\rm q}\|\widehat O_u-O_u^\Omega\|_2
 \le
 \sum_b
 \left[
 2V_{\max}(e^{\alpha_{ub}}-1)
 +e^{\alpha_{ub}}\{\nu_{ub}+(V_{\max}+\nu_{ub})\omega_{ub}\}
 \right]
 \sum_{v\in\mathcal B_b}z_{uv}.
\]
Since $\alpha_u=\max_b\alpha_{ub}$,
\[
 Z_u^{\rm q}\ge e^{-\alpha_u}Z_u,
 \qquad
 \sum_{v\in\mathcal B_b}z_{uv}=Z_um_{ub}.
\]
Dividing by $Z_u^{\rm q}$ establishes
\begin{equation}
 \|\widehat O_u-O_u^\Omega\|_2
 \le e^{\alpha_u}\sum_bm_{ub}c_{ub}.
 \label{eq:v2-quant-proof}
\end{equation}
Combining Equations~\eqref{eq:v2-support-proof} and \eqref{eq:v2-quant-proof} gives the block-dependent bound.
Since $e^{\alpha_u}c_{ub}\le\varepsilon_{p_{ub}}$ under the common format bounds, Equation~\eqref{eq:v2-total-error} follows.
\end{proof}

\paragraph{Arithmetic roundoff and empty rows.}
Let $O_u^{\rm impl}$ denote the implemented output.
If
$\|O_u^{\rm impl}-\widehat O_u\|_2\le\varepsilon_u^{\rm arith}$,
then
\[
 \|O_u^{\rm dense}-O_u^{\rm impl}\|_2
 \le2V_{\max}\delta_u+
 e^{\alpha_u}\sum_bm_{ub}c_{ub}+
 \varepsilon_u^{\rm arith}.
\]
This last term includes ordinary floating-point rounding and any reference-path format conversion, including effects on subsequent shared normalization.
Omitting a 16-bit quantization term therefore does not assume exact hardware arithmetic.
If a query has no valid retained key, the implementation returns zero, and
$\|O_u^{\rm dense}-O_u^{\rm impl}\|_2\le V_{\max}$ directly.

\subsection{Proof of the v1-to-v2 Error-Bound Comparison}
\label{app:v2-recovery}

\begin{proof}[Proof of Proposition~\ref{prop:v2-recovery}]
Fix a query and let $\Omega_{16}$ be the support retained by v1 and preserved at 16 bits by v2.
Let $L_8,L_4$ be disjoint subsets of its complement, with $\tau_p=\sum_{v\in L_p}P_{uv}$.
The v2 support is $\Omega'=\Omega_{16}\cup L_8\cup L_4$, with dense attention mass $1-\delta_{\rm v2}$.
Hence $\Omega_{16}$ has mass $1-\delta_{\rm v2}-\tau>0$.
The v1 allocation skips mass $\delta_{\rm v1}=\delta_{\rm v2}+\tau$ and has no additional low-bit quantization cost, so Lemma~\ref{thm:v2-quantization} gives
$B_{\rm v1}=2V_{\max}(\delta_{\rm v2}+\tau)$.
On $\Omega'$, the exact renormalized masses are
\[
 m_u^{(16)}=\frac{1-\delta_{\rm v2}-\tau}{1-\delta_{\rm v2}},\qquad
 m_u^{(8)}=\frac{\tau_8}{1-\delta_{\rm v2}},\qquad
 m_u^{(4)}=\frac{\tau_4}{1-\delta_{\rm v2}}.
\]
Applying the lemma to v2 gives $B_{\rm v2}$ in Equation~\eqref{eq:v2-recovery-bounds}.
Subtracting the two bounds yields
\begin{equation}
 B_{\rm v2}-B_{\rm v1}
 =-2V_{\max}\tau+
 \frac{\tau_8\varepsilon_8+\tau_4\varepsilon_4}{1-\delta_{\rm v2}}.
 \label{eq:v2-recovery-difference}
\end{equation}
Since $1-\delta_{\rm v2}>0$, Equation~\eqref{eq:v2-recovery-condition} is precisely the condition for this difference to be negative.
\end{proof}

The comparison concerns output-error bounds for reconstructed operators with matched 16-bit supports.
It isolates v2's low-bit recovery from changes in draft scoring or support selection.
If $\rho_{\rm v1}$ and $\rho_{\rm v2}$ denote the skipped fractions of all video-region pairs, and $\rho_4,\rho_8$ the fractions assigned to the low-bit paths by v2, this construction gives $\rho_{\rm v1}=\rho_{\rm v2}+\rho_4+\rho_8$.
The quantities $\delta_{\rm v2},\tau_8,\tau_4$ are dense attention masses, not block fractions; equal sparsity ratios alone do not establish the comparison.
Arithmetic roundoff can be added separately as in Appendix~\ref{app:v2-error-proof}; measured latency and relative output error are evaluated by the operator experiments.

\subsection{Proof of Importance-Guided Precision Allocation}
\label{app:v2-precision-allocation}

Fix the retained video-region pairs and the precision of all always-retained interactions.
For the allocation objective, set $P^\Omega_{uv}=0$ on rows with empty retained support; their zero outputs contribute a fixed support error.
For a retained region pair $(a,b)$, its contribution to query-averaged attention mass is
\[
 W_{ab}=\frac1{N_{\rm vid}}
 \sum_{u\in\mathcal R_a}\sum_{v\in\mathcal R_b}P^\Omega_{uv},
 \qquad N_{\rm vid}=\sum_a|\mathcal R_a|.
\]
The inner sum includes all physical blocks inherited from region $b$.
With the support fixed, these weights are independent of precision.
Averaging Lemma~\ref{thm:v2-quantization} over video queries gives the assignment-dependent objective
\begin{equation}
 \sum_{(a,b)\ {\rm retained}}W_{ab}\varepsilon_{p_{ab}}.
 \label{eq:v2-allocation-objective}
\end{equation}
The support error and always-retained interactions with fixed precision contribute constants.

\begin{proof}[Proof of Theorem~\ref{prop:v2-allocation}]
Consider two retained pairs with $W_1\ge W_2$ and two format costs
$\varepsilon_{\rm low}\le\varepsilon_{\rm high}$.
The excess cost of assigning the larger error cost to the larger weight is
\[
 (W_1\varepsilon_{\rm high}+W_2\varepsilon_{\rm low})
 -(W_1\varepsilon_{\rm low}+W_2\varepsilon_{\rm high})
 =(W_1-W_2)(\varepsilon_{\rm high}-\varepsilon_{\rm low})\ge0.
\]
Exchanging each inverted assignment preserves the quotas and cannot increase the objective.
Repeated exchanges yield the stated ordering.
\end{proof}

The theorem concerns the error bound at fixed support and quotas.
The implemented router approximates its mass ordering with $A_{ab}$; it does not evaluate $W_{ab}$ or the error envelopes online.
The pooling mixture controls this proxy and does not enter the quantization-error decomposition.

\section{Implementation Details}
\label{app:v2-implementation}

Block-wise precision allocation requires multiple operand formats while retaining a common normalization across all selected blocks.
We fuse operand preparation to reduce data movement and organize computation into precision-specific phases that share one online softmax state.

\subsection{Routing and Precision Formats}
\label{app:v2-routing-details}

For each spatial region $\mathcal R_a$, valid-token counts $n_a=|\mathcal R_a|$ exclude padding.
The descriptors in Equation~\eqref{eq:v2-draft} are
$\bar Q_a=n_a^{-1}\sum_{u\in\mathcal R_a}Q_u$ and
$Q^{\max}_{a,c}=\max_{u\in\mathcal R_a}Q_{u,c}$, with keys defined analogously.
Both branches normalize over key regions.
The router ranks all $R^2$ video-region pairs within each head, resolving ties by region-pair index.
Precision fractions determine integer quotas $T_p$ satisfying $\sum_pT_p=T$.
Each region pair's precision is assigned to all of its physical attention blocks, so the number of retained blocks and their precisions can vary across query regions.
The labels $4$, $8$, and $16$ denote the NVFP4, INT8, and 16-bit paths.
The INT8 path uses INT8 for $QK^\top$ and FP8 for the probability--value multiplication.
Prefix interactions lie outside the video-region budget: video queries retain all valid prefix keys, and prefix queries use non-sparse attention.
Rows with no valid retained key return zero.

\subsection{Fused Quantization and Reordering}
\label{sec:fused-preparation}

Section~\ref{sec:hardware} describes the fused preparation shown in Figure~\ref{fig:fused-preparation}.
Here we give the spatial packing and operand-layout details.

We partition each $H_v\times W_v$ frame into $R_f$ connected regions and pack each region into a contiguous block of capacity $C$, where
\begin{equation}
 R_f=\left\lceil\frac{H_vW_v}{C}\right\rceil,
 \qquad 0\le R_fC-H_vW_v<C.
 \label{eq:ragged-padding}
\end{equation}
This ragged 2D layout uses the minimum number of blocks without requiring a fixed patch shape to divide the frame dimensions.
To preserve spatial locality, we prioritize the smallest total perimeter among candidate partitions.
For a rectangular token grid,
\begin{equation}
 \sum_{a=0}^{R_f-1}|\partial\mathcal R_a|
 =2(H_v+W_v)+2E_{\mathrm{cut}},
 \label{eq:region-perimeter}
\end{equation}
where $|\partial\mathcal R_a|$ counts unit boundary edges and $E_{\mathrm{cut}}$ counts horizontal or vertical neighbor pairs assigned to different regions.
Reducing total perimeter therefore preserves more spatial neighbors within the same pooled region.
Valid-token counts exclude padding from pooling and attention; the inverse index map restores the output order.

The $QK^\top$ accumulator layout can differ from the operand layout required by the subsequent probability--value multiplication~\citep{shah2024flashattention3}.
We align these layouts so that $\widetilde P$ can feed the subsequent matrix multiplication through thread-local conversion and packing.
For an intra-block permutation matrix $\Pi$, the matching column and row permutations preserve the product:
\begin{equation}
 (\widetilde P\Pi)(\Pi^\top V)=\widetilde PV.
 \label{eq:operand-permutation}
\end{equation}
We fuse the required intra-block $K/V$ permutations with quantization, and incorporate $V$ transposition into operand packing and loading.

\subsection{Quantization Formats and Scales}
\label{app:v2-quantization-formats}

Table~\ref{tab:v2-arithmetic} distinguishes matrix operand formats from accumulation formats.
The row maxima, exponential sums, and output state shared across blocks and phases are stored in FP32.
For the INT8 and FP16 paths, each physical key block's FP16 probability--value accumulator is converted to FP32 before being added to the persistent output state.

\begin{table}[htbp]
 \centering
 \small
 \caption{Operand and accumulator formats in the mixed-precision kernel.}
 \label{tab:v2-arithmetic}
 \setlength{\tabcolsep}{9pt}
 \begin{tabular}{lcccc}
  \toprule
  Path & $Q,K$ & $QK^\top$ accumulator & $\widetilde P,V$ & $\widetilde PV$ accumulator\\
  \midrule
  NVFP4 & E2M1 & FP32 & E2M1 & FP32\\
  INT8 & INT8 & INT32 & E4M3 & FP16\\
  FP16 & FP16 & FP32 & FP16 & FP16\\
  \bottomrule
 \end{tabular}
\end{table}

\paragraph{NVFP4 operands.}
We use E2M1 values with E4M3 local scales for groups of 16 elements along each matrix multiplication's reduction dimension.
For $Q$ and $K$, groups run along feature channels; for $V$, they run along key tokens within each output channel.
Given a positive per-tensor scale $g_X$ for $X\in\{Q,K,V\}$ and a quantization group $G$, the local scale and reconstructed values are
\begin{equation}
 \sigma_{X,G}=\operatorname{cast}_{\mathrm{E4M3}}
 \!\left(\frac{\max_{x\in G}|x|}{6g_X}\right),
 \qquad
 \widehat x=g_X\sigma_{X,G}\operatorname{cast}_{\mathrm{E2M1}}
 \!\left(\frac{x}{g_X\sigma_{X,G}}\right).
 \label{eq:v2-nvfp4-quantization}
\end{equation}
A group whose encoded scale is zero is encoded as zero.
The matrix instructions apply local scales, and $g_Qg_K/\sqrt d$ restores the scale of the attention logits.

The exponential weights $\widetilde P$ are quantized online.
For a query $u$ and a group $G$ of 16 key positions, we use
\begin{equation}
 \begin{aligned}
 \sigma_{P,u,G}
 &=\operatorname{cast}_{\mathrm{E4M3}}
   \!\left(448\max_{v\in G}\widetilde P_{uv}\right),\\
 \widehat{\widetilde P}_{uv}
 &=\frac{\sigma_{P,u,G}}{2688}
   \operatorname{cast}_{\mathrm{E2M1}}
   \!\left(\frac{2688\widetilde P_{uv}}{\sigma_{P,u,G}}\right).
 \end{aligned}
 \label{eq:v2-nvfp4-probability}
\end{equation}
Here $2688=6\cdot448$ combines the maximum finite values of E2M1 and E4M3; zero-scale groups again produce zeros.
The denominator is updated with the FP32 weights before this conversion.

\paragraph{INT8 and FP16 operands.}
The INT8 path quantizes $Q$ and $K$ with one FP32 scale per query block and physical key block, respectively.
For either operand block $X$, the scale is $\delta_X=\max|X|/127+10^{-7}$ and the encoding rounds $X/\delta_X$ to the nearest integer, with ties away from zero.
The INT32 dot products are converted to FP32 and rescaled by $\delta_Q\delta_K/\sqrt d$.
For the probability--value product, $V$ is encoded in E4M3 with the per-channel scale $\delta_c^V=\max_v|V_{v,c}|/2.25$; all-zero channels are encoded as zero.
Exponentiation uses base two with an additive offset of $8.807$, amplifying the weights by approximately $448$ before E4M3 conversion.
The probability dequantization scale is $\eta=0.0022326917$.
The denominator accumulates the amplified FP32 weights, while the numerator uses E4M3 operands.
The FP16 path uses FP16 $Q,K,V$ and converts exponential weights to FP16 after updating the FP32 denominator.

\subsection{Phased Mixed-Precision Execution}
\label{sec:phased-execution}

The NVFP4, INT8, and FP16 phases extend the same online-softmax state described in Section~\ref{sec:hardware}.
Changes in the running row maximum rescale both the existing numerator and denominator before new block contributions are accumulated.
Phase-boundary transformations account separately for the numerical scales of the probability and value operands.

The NVFP4 phase executes first.
Its matrix instructions incorporate the local scales of $\widetilde P$ and $V$; multiplying the resulting numerator by $g_V/2688$ restores the common accumulation scale.
Its denominator already accumulates unamplified exponential weights.
The INT8 phase enters and leaves its probability and value scales using Equation~\eqref{eq:precision-state-conversion}; the running row maximum is unchanged by this conversion.
The FP16 phase then continues the accumulation in the common scale.
Final normalization divides the output numerator by the shared denominator, and the inverse spatial index map restores token order.

The common shared-memory workspace is sized for the largest phase or output-staging requirement.
Each phase's operand fragments, quantization scales, and loading temporaries remain local to that phase, allowing their storage to be reused by subsequent phases.

\input{sections/appendix-visualizations}

%% file: tables/vbench-main-results.tex
\begingroup
\captionsetup{font=small,skip=3pt}
\fontsize{8.5}{10}\selectfont
\setlength{\tabcolsep}{2.2pt}
\renewcommand{\arraystretch}{1.00}
\setlength{\heavyrulewidth}{0.6pt}
\setlength{\lightrulewidth}{0.3pt}
\arrayrulecolor{black!65}
\definecolor{oursrow}{RGB}{224,237,250}
\setlength{\LTleft}{\fill}
\setlength{\LTright}{\fill}
\setlength{\LTcapwidth}{\textwidth}
\begin{longtable}{@{}l!{\color{black!35}\vrule width 0.3pt}c!{\color{black!35}\vrule width 0.3pt}*{9}{c}@{}}
\caption{Detailed VBench scores (\%) for HunyuanVideo and MiniMax-H3. Configurations correspond to Table~\ref{tab:main-results}.}\label{tab:vbench-main-results}\\
\toprule
\multirow{2}{*}{Method} & \multirow{2}{*}{\makecell{Bit (\%)\\0 / 4 / 8 / 16}} & \multicolumn{9}{c}{VBench$\uparrow$} \\
\cmidrule(lr){3-11}
 & & SC & BC & TF & MS & AQ & IQ & DD & OC & AVG \\
\endfirsthead
\multicolumn{11}{c}{\tablename\ \thetable\ (continued)} \\
\toprule
\multirow{2}{*}{Method} & \multirow{2}{*}{\makecell{Bit (\%)\\0 / 4 / 8 / 16}} & \multicolumn{9}{c}{VBench$\uparrow$} \\
\cmidrule(lr){3-11}
 & & SC & BC & TF & MS & AQ & IQ & DD & OC & AVG \\
\midrule
\endhead
\bottomrule
\endfoot
\midrule
\multicolumn{11}{c}{\textit{\textbf{HunyuanVideo}}} \\*
\midrule
Dense & 0/0/0/100 & 93.87 & 96.60 & 98.97 & 99.11 & 61.54 & 67.87 & 72.22 & 26.28 & 77.06 \\
XAttn & 85.6/0/0/14.4 & 93.76 & 95.57 & 99.05 & 98.76 & 60.10 & 66.38 & 51.39 & 26.16 & 73.90 \\
SVG2 & 83.6/0/0/16.4 & 93.38 & 96.88 & 99.57 & 99.37 & 50.30 & 51.24 & 9.72 & 25.15 & 65.70 \\
Sol-Attn & 85.8/0/0/14.2 & 93.62 & 96.49 & 98.95 & 99.05 & 61.52 & 67.55 & 70.83 & 26.50 & 76.81 \\
SVG-EAR & 85.2/0/0/14.8 & 93.83 & 96.99 & 99.65 & 99.43 & 51.41 & 52.91 & 12.50 & 25.20 & 66.49 \\
SVOO & 85.3/0/0/14.7 & 94.30 & 96.68 & 99.61 & 99.38 & 58.82 & 63.27 & 33.33 & 25.58 & 71.37 \\
\rowcolor{oursrow}[2.2pt][2.2pt]
\textbf{Ours} & 85/0/0/15 & 94.76 & 97.23 & 99.69 & 99.48 & 60.96 & 68.10 & 50.00 & 26.16 & 74.55 \\
\rowcolor{oursrow}[2.2pt][2.2pt]
\textbf{Ours} & 85/0/7.5/7.5 & 94.83 & 97.33 & 99.70 & 99.49 & 60.96 & 67.81 & 51.39 & 26.10 & 74.70 \\
\midrule
SpargeAttn & 85/0/15/0 & 94.86 & 97.33 & 99.71 & 99.50 & 60.93 & 67.89 & 48.61 & 26.05 & 74.36 \\
\rowcolor{oursrow}[2.2pt][2.2pt]
\textbf{Ours} & 85/0/15/0 & 94.93 & 97.15 & 99.70 & 99.50 & 60.79 & 66.54 & 54.17 & 25.86 & 74.83 \\
\rowcolor{oursrow}[2.2pt][2.2pt]
\textbf{Ours} & 80.7/14.4/4.8/0 & 93.53 & 96.31 & 99.21 & 99.30 & 60.83 & 65.90 & 69.44 & 23.14 & 75.96 \\
\midrule
SageAttn3 & 0/100/0/0 & 93.72 & 96.63 & 99.24 & 99.29 & 60.76 & 65.89 & 65.28 & 22.53 & 75.42 \\
\rowcolor{oursrow}[2.2pt][2.2pt]
\textbf{Ours} & 13.4/64.9/21.6/0 & 93.71 & 96.55 & 99.21 & 99.28 & 61.14 & 66.73 & 66.67 & 22.70 & 75.75 \\
\midrule
\multicolumn{11}{c}{\textit{\textbf{MiniMax-H3}}} \\*
\midrule
Dense & 0/0/0/100 & 90.38 & 88.41 & 97.92 & 98.95 & 62.95 & 70.67 & 91.67 & 26.01 & 78.37 \\
XAttn & 79.9/0/0/20.1 & 89.36 & 89.23 & 98.19 & 98.66 & 62.40 & 68.47 & 91.67 & 26.30 & 78.03 \\
SVG2 & 79.9/0/0/20.1 & 89.22 & 89.24 & 98.11 & 98.64 & 60.48 & 65.31 & 88.89 & 26.29 & 77.02 \\
Sol-Attn & 80/0/0/20 & 90.33 & 88.50 & 97.89 & 98.90 & 63.08 & 70.95 & 90.28 & 26.07 & 78.25 \\
SVG-EAR & 79.9/0/0/20.1 & 89.45 & 89.16 & 98.19 & 98.78 & 60.85 & 65.52 & 87.50 & 26.29 & 76.97 \\
SVOO & 79.9/0/0/20.1 & 89.86 & 89.63 & 98.06 & 98.78 & 61.98 & 68.30 & 91.67 & 26.25 & 78.07 \\
\rowcolor{oursrow}[2.2pt][2.2pt]
\textbf{Ours} & 80/0/0/20 & 90.45 & 88.84 & 97.94 & 98.95 & 62.80 & 70.53 & 87.50 & 26.15 & 77.90 \\
\rowcolor{oursrow}[2.2pt][2.2pt]
\textbf{Ours} & 80/0/10/10 & 90.39 & 88.46 & 97.94 & 98.94 & 62.88 & 70.47 & 87.50 & 26.13 & 77.84 \\
\midrule
SpargeAttn & 80/0/20/0 & 90.36 & 89.50 & 98.00 & 98.85 & 62.64 & 69.55 & 84.72 & 26.15 & 77.47 \\
\rowcolor{oursrow}[2.2pt][2.2pt]
\textbf{Ours} & 80/0/20/0 & 90.23 & 88.88 & 97.82 & 98.82 & 62.86 & 70.16 & 90.28 & 26.13 & 78.15 \\
\rowcolor{oursrow}[2.2pt][2.2pt]
\textbf{Ours} & 74.3/19.3/6.4/0 & 90.35 & 88.19 & 98.01 & 98.91 & 62.69 & 70.10 & 87.50 & 26.15 & 77.74 \\
\midrule
SageAttn3 & 0/100/0/0 & 90.13 & 88.09 & 98.03 & 98.93 & 62.43 & 69.46 & 86.11 & 25.99 & 77.40 \\
\rowcolor{oursrow}[2.2pt][2.2pt]
\textbf{Ours} & 13.6/64.8/21.6/0 & 90.29 & 87.42 & 98.04 & 98.94 & 62.71 & 69.86 & 86.11 & 26.04 & 77.43 \\
\end{longtable}
\endgroup

%% file: tables/vbench-fewstep-results.tex
\begingroup
\captionsetup{font=small,skip=3pt}
\fontsize{8.5}{10}\selectfont
\setlength{\tabcolsep}{2.2pt}
\renewcommand{\arraystretch}{1.00}
\setlength{\heavyrulewidth}{0.6pt}
\setlength{\lightrulewidth}{0.3pt}
\arrayrulecolor{black!65}
\definecolor{oursrow}{RGB}{224,237,250}
\setlength{\LTleft}{\fill}
\setlength{\LTright}{\fill}
\setlength{\LTcapwidth}{\textwidth}
\begin{longtable}{@{}l!{\color{black!35}\vrule width 0.3pt}c!{\color{black!35}\vrule width 0.3pt}*{9}{c}@{}}
\caption{Detailed VBench scores (\%) for few-step generation. Configurations correspond to Table~\ref{tab:fewstep-results}.}\label{tab:vbench-fewstep-results}\\
\toprule
\multirow{2}{*}{Method} & \multirow{2}{*}{\makecell{Bit (\%)\\0 / 4 / 8 / 16}} & \multicolumn{9}{c}{VBench$\uparrow$} \\
\cmidrule(lr){3-11}
 & & SC & BC & TF & MS & AQ & IQ & DD & OC & AVG \\
\endfirsthead
\multicolumn{11}{c}{\tablename\ \thetable\ (continued)} \\
\toprule
\multirow{2}{*}{\textbf{Method}} & \multirow{2}{*}{\makecell{\textbf{Bit (\%)}\\0 / 4 / 8 / 16}} & \multicolumn{9}{c}{\textbf{VBench}$\uparrow$} \\
\cmidrule(lr){3-11}
 & & SC & BC & TF & MS & AQ & IQ & DD & OC & AVG \\
\midrule
\endhead
\bottomrule
\endfoot
\midrule
\multicolumn{11}{c}{\textit{\textbf{4-Step Diffusion}}} \\*
\midrule
Dense & 0/0/0/100 & 91.65 & 93.35 & 98.77 & 99.06 & 62.56 & 73.80 & 50.00 & 23.92 & 74.14 \\
Sol-Attn & 65/0/0/35 & 91.68 & 93.35 & 98.75 & 99.03 & 62.76 & 73.97 & 50.00 & 23.99 & 74.19 \\
SVG2 & 64/0/0/36 & 91.48 & 93.37 & 98.71 & 98.89 & 62.50 & 72.95 & 48.61 & 23.86 & 73.80 \\
SVG2 & 85.8/0/0/14.2 & 91.33 & 93.16 & 98.75 & 98.90 & 62.29 & 72.42 & 44.44 & 23.75 & 73.13 \\
SVG-EAR & 66.4/0/0/33.6 & 91.28 & 93.56 & 98.76 & 98.97 & 62.65 & 72.83 & 48.61 & 23.84 & 73.81 \\
SVG-EAR & 86.1/0/0/13.9 & 91.50 & 93.43 & 98.80 & 98.94 & 62.47 & 72.24 & 48.61 & 23.81 & 73.73 \\
SVOO & 65/0/0/35 & 91.54 & 93.62 & 98.73 & 98.95 & 62.33 & 72.75 & 48.61 & 23.84 & 73.80 \\
SVOO & 76.3/0/0/23.7 & 91.52 & 93.43 & 98.77 & 98.96 & 62.61 & 72.10 & 48.61 & 23.93 & 73.74 \\
\rowcolor{oursrow}[2.2pt][2.2pt]
\textbf{Ours} & 65/0/0/35 & 91.63 & 93.16 & 98.75 & 99.03 & 62.60 & 73.78 & 50.00 & 23.90 & 74.11 \\
\rowcolor{oursrow}[2.2pt][2.2pt]
\textbf{Ours} & 63.4/0/18.3/18.3 & 91.54 & 93.09 & 98.72 & 98.96 & 62.51 & 73.79 & 48.61 & 23.89 & 73.89 \\
\rowcolor{oursrow}[2.2pt][2.2pt]
\textbf{Ours} & 53.2/0/37.5/9.4 & 91.47 & 93.39 & 98.71 & 98.81 & 62.38 & 73.49 & 48.61 & 23.93 & 73.85 \\
\rowcolor{oursrow}[2.2pt][2.2pt]
\textbf{Ours} & 29.6/35.2/35.2/0 & 91.62 & 93.21 & 98.73 & 98.96 & 62.25 & 73.68 & 50.00 & 23.97 & 74.05 \\
\midrule
SpargeAttn & 65/0/35/0 & 91.62 & 93.18 & 98.75 & 99.01 & 62.53 & 73.78 & 47.22 & 23.81 & 73.74 \\
\rowcolor{oursrow}[2.2pt][2.2pt]
\textbf{Ours} & 65/0/35/0 & 91.43 & 93.45 & 98.65 & 98.45 & 61.77 & 73.07 & 48.61 & 23.77 & 73.65 \\
\rowcolor{oursrow}[2.2pt][2.2pt]
\textbf{Ours} & 54.5/34.1/11.4/0 & 91.61 & 93.16 & 98.71 & 98.93 & 62.21 & 73.57 & 50.00 & 23.82 & 74.00 \\
\midrule
SageAttn3 & 0/100/0/0 & 91.47 & 93.51 & 98.71 & 98.89 & 62.16 & 73.41 & 50.00 & 23.84 & 74.00 \\
\rowcolor{oursrow}[2.2pt][2.2pt]
\textbf{Ours} & 13.1/65.2/21.7/0 & 91.66 & 93.14 & 98.72 & 98.93 & 62.19 & 73.51 & 48.61 & 23.92 & 73.83 \\
\midrule
\multicolumn{11}{c}{\textit{\textbf{8-Step Diffusion}}} \\*
\midrule
Dense & 0/0/0/100 & 91.94 & 89.76 & 96.72 & 98.64 & 64.28 & 72.92 & 98.61 & 26.06 & 79.87 \\
Sol-Attn & 79.9/0/0/20.1 & 91.87 & 90.07 & 96.65 & 98.57 & 64.40 & 73.01 & 98.61 & 26.11 & 79.91 \\
SVG2 & 79.6/0/0/20.4 & 91.61 & 90.02 & 97.22 & 98.67 & 63.80 & 69.99 & 97.22 & 26.21 & 79.34 \\
SVG2 & 86.7/0/0/13.3 & 91.46 & 89.77 & 97.31 & 98.73 & 62.91 & 68.80 & 97.22 & 26.15 & 79.04 \\
SVG-EAR & 79.6/0/0/20.4 & 91.80 & 90.66 & 97.25 & 98.74 & 63.95 & 70.21 & 97.22 & 26.24 & 79.51 \\
SVG-EAR & 88.4/0/0/11.6 & 91.57 & 90.04 & 97.35 & 98.76 & 63.16 & 69.02 & 94.44 & 26.17 & 78.81 \\
SVOO & 80.1/0/0/19.9 & 91.67 & 90.39 & 97.33 & 98.68 & 63.26 & 69.27 & 97.22 & 26.11 & 79.24 \\
\rowcolor{oursrow}[2.2pt][2.2pt]
\textbf{Ours} & 80/0/0/20 & 91.93 & 89.79 & 96.74 & 98.64 & 64.49 & 72.71 & 98.61 & 26.11 & 79.88 \\
\rowcolor{oursrow}[2.2pt][2.2pt]
\textbf{Ours} & 74.3/0/12.8/12.8 & 91.92 & 89.64 & 96.83 & 98.60 & 64.42 & 72.73 & 98.61 & 26.08 & 79.85 \\
\rowcolor{oursrow}[2.2pt][2.2pt]
\textbf{Ours} & 57.9/0/42.1/0 & 91.71 & 90.52 & 97.08 & 98.28 & 63.66 & 71.63 & 98.61 & 26.06 & 79.69 \\
\rowcolor{oursrow}[2.2pt][2.2pt]
\textbf{Ours} & 40.5/59.5/0/0 & 91.86 & 90.36 & 97.11 & 98.66 & 64.12 & 71.07 & 98.61 & 26.03 & 79.73 \\
\rowcolor{oursrow}[2.2pt][2.2pt]
\textbf{Ours} & 50.8/24.6/24.6/0 & 91.91 & 89.94 & 96.88 & 98.59 & 64.31 & 72.34 & 98.61 & 25.95 & 79.82 \\
\midrule
SpargeAttn & 80/0/20/0 & 92.00 & 90.31 & 96.90 & 98.65 & 64.53 & 71.99 & 97.22 & 25.96 & 79.70 \\
\rowcolor{oursrow}[2.2pt][2.2pt]
\textbf{Ours} & 80/0/20/0 & 91.78 & 90.16 & 96.88 & 98.46 & 64.28 & 72.11 & 98.61 & 26.17 & 79.81 \\
\rowcolor{oursrow}[2.2pt][2.2pt]
\textbf{Ours} & 74/13/13/0 & 91.99 & 90.03 & 96.85 & 98.60 & 64.52 & 72.45 & 98.61 & 26.08 & 79.89 \\
\midrule
SageAttn3 & 0/100/0/0 & 91.81 & 90.08 & 97.02 & 98.59 & 64.08 & 71.61 & 98.61 & 25.98 & 79.72 \\
\rowcolor{oursrow}[2.2pt][2.2pt]
\textbf{Ours} & 13/65.3/21.8/0 & 91.84 & 90.21 & 97.00 & 98.58 & 64.12 & 72.03 & 98.61 & 26.00 & 79.80 \\
\end{longtable}
\endgroup

%% file: sections/appendix-visualizations.tex
\clearpage
\section{Additional Visual Comparisons}
\label{app:visual-comparisons}

Figures~\ref{fig:appendix-rail}--\ref{fig:appendix-park-cat} provide additional visual comparisons.

\begin{figure}[!ht]
\centering
\includegraphics[width=\linewidth]{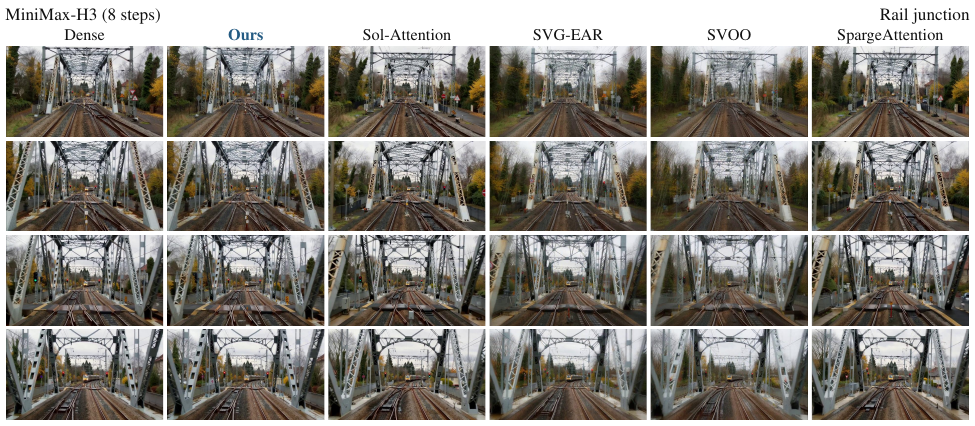}
\captionsetup{font=small,skip=3pt}
\caption{Visual comparison of rail junctions in 8-step MiniMax-H3 generation.}
\label{fig:appendix-rail}
\end{figure}

\begin{figure}[!ht]
\centering
\includegraphics[width=\linewidth]{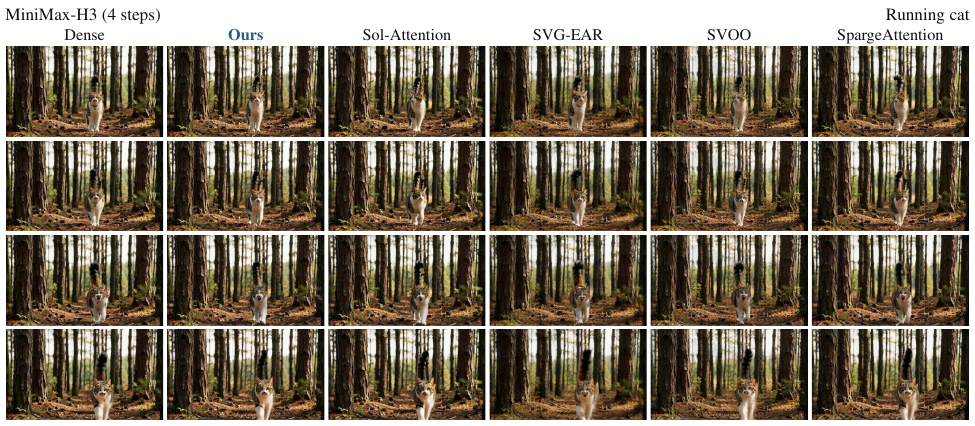}
\captionsetup{font=small,skip=3pt}
\caption{Visual comparison of a running cat in 4-step MiniMax-H3 generation.}
\label{fig:appendix-running-cat}
\end{figure}

\begin{figure}[!ht]
\centering
\includegraphics[width=\linewidth]{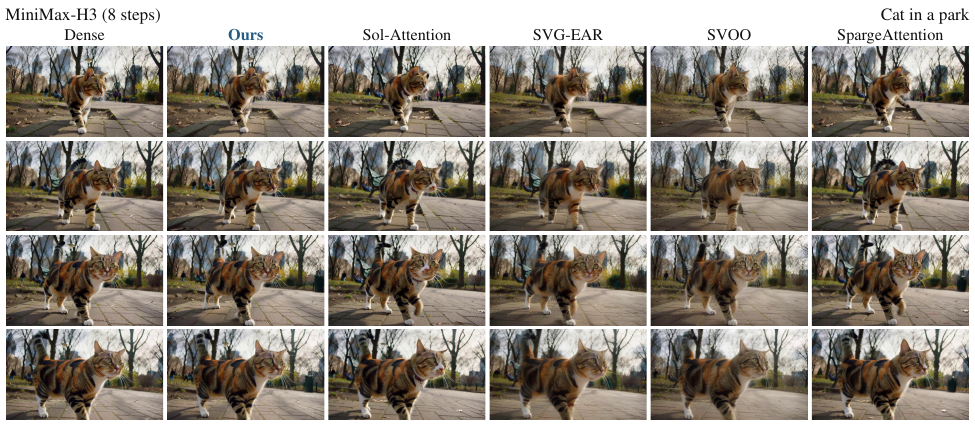}
\captionsetup{font=small,skip=3pt}
\caption{Visual comparison of a cat in a park in 8-step MiniMax-H3 generation.}
\label{fig:appendix-park-cat}
\end{figure}